\tikzset{elliptic state/.style={draw,ellipse}}
\newtheorem{theorem}{Theorem}[section]
\newtheorem{lemma}[theorem]{Lemma}
\newcommand{\code}[1]{{\texttt {#1}}}
\newcommand{\R}{\mathbb{R}}
\newcommand{\E}{\mathbb{E}}
\newcommand{\Z}{\mathbb{Z}}
\newcommand{\A}{\mathcal{A}}
\newcommand{\X}{\mathcal{X}}
\newcommand{\T}{\mathcal{T}}
\newcommand{\D}{\mathcal{D}}
\newcommand{\B}{\mathcal{B}}
\newcommand{\G}{\mathcal{G}}
\newcommand{\W}{\mathcal{W}}
\newcommand{\F}{\mathcal{F}}
\newcommand{\V}{\mathcal{V}}
\newcommand{\M}{\mathcal{M}}
\renewcommand{\L}{\mathcal{L}}
\renewcommand{\H}{\mathcal{H}}
\newcommand{\semantics}[1]{{\llbracket #1 \rrbracket}}
\newcommand{\norm}[1]{{\lVert #1 \rVert}}
\newcommand{\nmod}[1]{{\lvert #1 \rvert}}
\newcommand{\async}{\code{async}}
\newcommand{\targ}{\code{targ}}
\newcommand{\ext}[1]{\semantics{#1}}
\algrenewcommand\algorithmicindent{0.75em}
\icmltitlerunning{Robust Subtask Learning for Compositional Generalization}
\begin{document}

\twocolumn[
\icmltitle{Robust Subtask Learning for Compositional Generalization}

% It is OKAY to include author information, even for blind
% submissions: the style file will automatically remove it for you
% unless you've provided the [accepted] option to the icml2023
% package.

% List of affiliations: The first argument should be a (short)
% identifier you will use later to specify author affiliations
% Academic affiliations should list Department, University, City, Region, Country
% Industry affiliations should list Company, City, Region, Country

% You can specify symbols, otherwise they are numbered in order.
% Ideally, you should not use this facility. Affiliations will be numbered
% in order of appearance and this is the preferred way.
%\icmlsetsymbol{equal}{*}

\begin{icmlauthorlist}
\icmlauthor{Kishor Jothimurugan}{yyy}
\icmlauthor{Steve Hsu}{yyy}
\icmlauthor{Osbert Bastani}{yyy}
\icmlauthor{Rajeev Alur}{yyy}
\end{icmlauthorlist}

\icmlaffiliation{yyy}{University of Pennsylvania}

\icmlcorrespondingauthor{Kishor Jothimurugan}{kishor@seas.upenn.edu}

% You may provide any keywords that you
% find helpful for describing your paper; these are used to populate
% the "keywords" metadata in the PDF but will not be shown in the document
\icmlkeywords{Machine Learning, ICML}

\vskip 0.3in
]

% this must go after the closing bracket ] following \twocolumn[ ...

% This command actually creates the footnote in the first column
% listing the affiliations and the copyright notice.
% The command takes one argument, which is text to display at the start of the footnote.
% The \icmlEqualContribution command is standard text for equal contribution.
% Remove it (just {}) if you do not need this facility.

\printAffiliationsAndNotice{}  % leave blank if no need to mention equal contribution
%\printAffiliationsAndNotice{\icmlEqualContribution} % otherwise use the standard text.

\begin{abstract}
Compositional reinforcement learning is a promising approach for training policies to perform complex long-horizon tasks. Typically, a high-level task is decomposed into a sequence of subtasks and a separate policy is trained to perform each subtask. In this paper, we focus on the problem of training subtask policies in a way that they can be used to perform any task; here, a task is given by a sequence of subtasks. We aim to maximize the worst-case performance over all tasks as opposed to the average-case performance. We formulate the problem as a two agent zero-sum game in which the adversary picks the sequence of subtasks. We propose two RL algorithms to solve this game: one is an adaptation of existing multi-agent RL algorithms to our setting and the other is an asynchronous version which enables parallel training of subtask policies. We evaluate our approach on two multi-task environments with continuous states and actions and demonstrate that our algorithms outperform state-of-the-art baselines.
\end{abstract}

\section{Introduction}\label{sec:intro}
Reinforcement learning (RL) has proven to be a promising strategy for solving complex control tasks such as walking~\citep{fujimoto2018addressing}, autonomous driving~\citep{ivanov2021compositional}, and dexterous manipulation~\citep{akkaya2019solving}. However, a key challenge facing the deployment of reinforcement learning in real-world tasks is its high sample complexity---solving any new task requires training a new policy designed to solve that task. One promising solution is \emph{compositional reinforcement learning}, where individual \emph{options} (or \emph{skills}) are first trained to solve simple tasks; then, these options can be composed together to solve more complex tasks~\citep{lee2018composing, lee2021adversarial, ivanov2021compositional}. For example, if a driving robot learns how to make left and right turns and to drive in a straight line, it can then drive along any route composed of these primitives.

A key challenge facing compositional reinforcement learning is the generalizability of the learned options. In particular, options trained under one distribution of tasks may no longer work well if used in a new task, since the distribution of initial states from which the options are used may shift. An alternate approach is to train the options separately to perform specific subtasks, but options trained this way might cause the system to reach states from which future subtasks are hard to perform. One can overcome this issue by handcrafting rewards to encourage avoiding such states~\citep{ivanov2021compositional}, in which case they generalize well, but this approach relies heavily on human time and expertise.

We propose a novel framework that addresses these challenges by formulating the option learning problem as an adversarial reinforcement learning problem. At a high level, the adversary chooses the task that minimizes the reward achieved by composing the available options. Thus, the goal is to learn a set of \emph{robust options} that perform well across \emph{all} potential tasks. Then, we provide two algorithms for solving this problem. The first adapts existing ideas for using reinforcement learning to solve Markov games to our setting. Then, the second shows how to leverage the compositional structure of our problem to learn options in parallel at each step of a value iteration procedure; in some cases, by enabling such parallelism, we can reduce the computational cost of learning.

\begin{figure*}[t]
    \centering
    \begin{subfigure}{0.2\textwidth}
    \includegraphics[width=\textwidth]{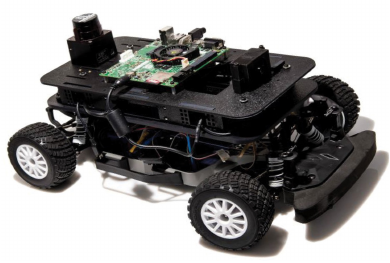}
    \caption{F1/10th Car}
    \end{subfigure}
    \qquad\qquad
    \begin{subfigure}{0.5\textwidth}
    \includegraphics[width=\textwidth]{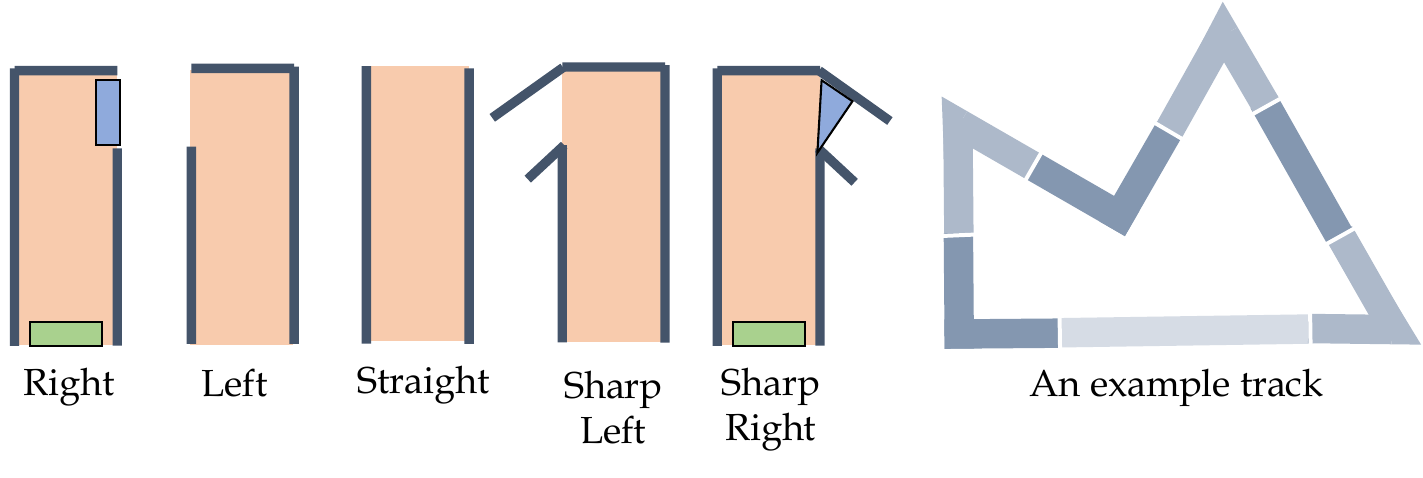}
    \caption{Segment Types}
    \end{subfigure}
    \caption{F1/10th Environment. The entry and exit regions for the right and sharp right segments are shown in green and blue respectively.}
    \label{fig:f110}
\end{figure*}

We validate our approach on two benchmarks: (i) a rooms environment where a point mass robot must navigate any given sequence of rooms, and (ii) a simulated version of the F1/10th car, where a small racing car must navigate any given racetrack that is composed of several different predefined track segments. In both environments, our empirical results demonstrate that robust options are critical for performing well on a wide variety of tasks.

In summary, our contributions are: (i) a game theoretic formulation of the compositional reinforcement learning problem, {(ii) an algorithm for solving this problem in the finite-state setting with guaranteed convergence in the limit,} (iii) two learning algorithms for continuous-state systems, and (iv) an empirical evaluation demonstrating the effectiveness of our approach.

\paragraph{Motivating example.}
Let us consider a small scale autonomous racing car shown in Figure~\ref{fig:f110} (a). We would like to train a controller that can be used to navigate the car through \emph{all} tracks constructed using five kinds of segments; the possible segments are shown in Figure~\ref{fig:f110} (b) along with an example track. A state of the car is a vector $s=(x,y,v,\theta)$ where $(x,y)$ is its position on the track relative to the current segment, $v$ is its current speed and $\theta$ is the heading angle. At any state $s$, the controller observes the measurements $o(s)\in\R^{1081}$ of a LiDAR sensor and outputs an action $(a,\omega)\in\R^2$ where $a$ is the throttle input and $\omega$ is the steering angle. In this environment, completing each segment is considered a subtask and a task corresponds to completing a sequence of segments (which may describe a track)---e.g., $\code{straight}\to\code{right}\to\code{left}\to\code{sharp-right}$. Upon completion of a subtask, the car enters the next segment and a change-of-coordinates\footnote{The change-of-coordinates is assumed for simpler modelling and does not affect the LiDAR measurements or the controller.} is applied to the car's state which is now relative to the new segment. The goal here is to learn one option for each subtask such that the agent can perform any task using these options.

If one trains the options independently with the only goal of reaching the end of each segment (e.g., using distance-based rewards), it might (and does) happen that the car reaches the end of a segment in a state that was not part of the initial states used to train the policy corresponding to the next segment. Therefore, one should make sure that the initial state distribution used during training includes such states as well---either manually or using dataset aggregation~\citep{ross2011reduction}. Furthermore, it is possible that the car reaches a state in the exit region of a segment from which it is challenging to complete the next subtask---e.g., a state in which the car is close to and facing towards a wall. Our algorithm identifies during training that, in order to perform future subtasks, it is better to reach the end of a segment in a configuration where the car is facing straight relative to the next segment. {Finally, we want the trained options to be such that they compose well for \emph{every} sequence of segments---i.e., track geometry. Therefore, we are interested in maximizing the worst case performance with respect to the choice of high-level task.

Our framework is broadly applicable in many real-world scenarios. For instance, the F1/10th example can be seen as a miniature version of an autonomous driving scenario where the agent needs to learn to perform maneuvers such as turning left/right, changing lanes etc. Here, the policies for performing these maneuvers need to ensure that the car is in a safe and controllable state for future maneuvers. Another interesting scenario is when a household robot has to perform multiple tasks such as cleaning, cooking etc., but needs to ensure that the policies for performing these tasks leave the house in a favorable state for future tasks---e.g., learning to cook without making the place too dirty (as it might be hard to perform the cleaning task later).}

\section{Problem Formulation}\label{sec:problem}
A \emph{multi-task Markov decision process (MDP)} is a tuple $\M = (S,A,P,\Sigma, R, F, T, \gamma,\eta,\sigma_0)$, where $S$ are the states, $A$ are the actions, $P(s'\mid s,a)\in[0,1]$ is the probability of transitioning from $s$ to $s'$ on action $a$, $\eta$ is the initial state distribution, and $\gamma\in(0,1)$ is the discount factor. Furthermore, $\Sigma$ is a set of subtasks and for each subtask $\sigma\in\Sigma$, $R_{\sigma}:S\times A\to\R$ is a reward function\footnote{We can also have a reward function $R_{\sigma}:S\times A\times S\to\R$ that depends on the next state but we omit it for clarity of presentation.}, $F_\sigma\subseteq S$ is a set of final states where the subtask is considered completed and $T_\sigma:F_{\sigma}\times S\to[0,1]$ is the jump probability function; upon reaching a state $s$ in $F_\sigma$ the system jumps to a new state $s'$ with probability $T_{\sigma}(s'\mid s)$. For the sake of clarity, we assume\footnote{This assumption can be removed by adding a fictitious copy of $F_{\sigma}$ to $S$ for each $\sigma\in\Sigma$.} that $T_\sigma(s'\mid s) = 0$ for any $s'$ with $s'\in F_{\sigma'}$ for some $\sigma'$. Finally, $\sigma_0\in\Sigma$ is the initial subtask which has to be completed first\footnote{When there is no fixed initial subtask, we can add a fictitious initial subtask.}. A multi-task MDP can be viewed as a discrete time variant of a hybrid automaton model~\citep{ivanov2021compositional}.

In the case of our motivating example, the set of subtasks is given by 
$
\Sigma=\{\code{left}, \code{right}, \code{straight},$ $\code{sharp-left}, \code{sharp-right}\}
$
with $F_{\sigma}$ denoting the exit region of the segment corresponding to subtask $\sigma$. We use the jump transitions $T$ to model the change-of-coordinates performed upon reaching an exit region. The reward function $R_\sigma$ for a subtask $\sigma$ is given by $R_{\sigma}(s,a,s') = -\norm{s'-c_{\sigma}}_2^2 + B\cdot\mathbbm{1}(s'\in F_{\sigma})$ where $c_{\sigma}$ is the center of the exit region and the subtask completion bonus $B$ is a positive constant. 

A task $\tau$ is defined to be an infinite sequence\footnote{A finite sequence can be appended with an infinite sequence of fictitious subtasks with zero reward.} of subtasks $\tau=\sigma_0\sigma_1\ldots$, and $\T$ denotes the set of all tasks. For any task $\tau\in\T$, $\tau[i]$ denotes the $i^{\text{th}}$ subtask $\sigma_i$ in $\tau$. In our setting, the task is chosen by the environment nondeterministically. Given a task $\tau$, a configuration of the environment is a pair $(s,i)\in S\times\Z_{
\geq 0}$ with $s\notin F_{\tau[i]}$ denoting that the system is in state $s$ and the current subtask is $\tau[i]$. The initial distribution over configurations $\Delta:S\times\Z_{\geq 0}\to[0,1]$ is given by $\Delta(s, i) = \eta_{\tau[0]}(s)$ if $i=0$ and $0$ otherwise. The probability of transitioning from $(s, i)$ to $(s', j)$ on an action $a$ is given by $\Pr((s',j)\mid (s,i), a) =$
\begin{align*}
\begin{cases}
\sum_{s''\in F_{\tau[i]}}P(s''\mid s, a)T_{\tau[i]}(s'\mid s'') &\text{if}\ j=i+1\\
P(s'\mid s, a) \qquad\qquad\qquad\qquad\ \ \text{if}\ j=i\ &\&\ s'\notin F_{\tau[i]}\\
0 &\text{otherwise.}
\end{cases}
\end{align*}

Intuitively, the system transitions to the next subtask if the current subtask is completed and stays in the current subtask otherwise. A (deterministic) policy is a function $\pi:S\to A$, where $a=\pi(s)$ is the action to take in state $s$. Our goal is to learn one policy $\pi_{\sigma}$ for each subtask $\sigma$ such that the discounted reward over the worst-case task $\tau$ is maximized. Formally, given a set of policies $\Pi=\{\pi_\sigma\ |\ \sigma\in\Sigma\}$ and a task $\tau$, we can define a Markov chain over configurations with initial distribution $\Delta$ and transition probabilities given by $P_{\Pi}((s',j)\mid (s,i)) = \Pr((s,j')\mid(s,i), \pi_{\tau[i]}(s))$. We denote by $\D_{\tau}^{\Pi}$ the distribution over infinite sequences of configurations $\rho=(s_0,i_0)(s_1,i_1)\ldots$ generated by $\tau$ and $\Pi$. Then, we define the objective function as
$$J(\Pi) = \inf_{\tau\in\T}\E_{\rho\sim\D_{\tau}^{\Pi}}\Big[\sum_{t=0}^\infty\gamma^tR_{\tau[i_t]}(s_t, \pi_{\tau[i_t]}(s_t))\Big].$$
These definitions can be naturally extended to stochastic policies as well. In our motivating example, choosing a large enough completion bonus $B$ guarantees the discounted reward to be higher for runs in which more subtasks are completed. Our aim is to compute a set of policies
$\Pi^*\in\arg\max_{\Pi}J(\Pi).$
%Note that we are interested in the worst-case performance of $\Pi$ as opposed to the average performance over all tasks which is commonly studied in multi-task RL.
Each subtask policy $\pi_{\sigma}$ defines an option~\citep{sutton1999between} $o_{\sigma} = (\pi_{\sigma}, I_{\sigma}, \beta_{\sigma})$ where $I_{\sigma} = S\setminus F_{\sigma}$ and $\beta_{\sigma}(s) = \mathbbm{1}(s\in F_{\sigma})$. Here, the choice of which option to trigger is made by the environment rather than the agent.

\section{Reduction to Stagewise Markov Games}\label{sec:reduction}
The problem statement naturally leads to a game theoretic view in which the environment is the adversary. We can formally reduce the problem to a two-agent zero-sum Markov game $\G = (\bar{S}, A_1, A_2, \bar{P}, \bar{R}, \bar{\gamma},\bar{\eta})$ where $\bar{S} = S\times\Sigma$ is the set of states,
$A_1=A$ are the actions of agent $1$ {(the agent representing the options)} and $A_2=\Sigma$ are the actions of agent $2$ {(the adversary)}. The transition probability function $\bar{P}:\bar{S}\times A_1\times A_2\times \bar{S}\to[0,1]$ is given by $\bar{P}((s',\sigma')\mid(s,\sigma), a_1, a_2) =$
\begin{align*}
\begin{cases}
P(s'\mid s,a_1) &\text{if}\ s \notin F_{\sigma}\ \&\ \sigma=\sigma'\\
T_{\sigma}(s'\mid s) &\text{if}\ s \in F_{\sigma}\ \&\ \sigma'= a_2\\
0 &\text{otherwise.}
\end{cases}
\end{align*}
We observe that the states are partitioned into two sets $\bar{S}=S_1\cup S_2$ where $S_1 = \{(s,\sigma)\mid s\notin F_{\sigma}\}$ is the set of states where agent $1$ acts (causing a step in $\M$) and $S_2 = \{(s,\sigma)\mid s\in F_{\sigma}\}$ is the set of states where agent $2$ takes actions (causing a change of subtask); this makes $\G$ a stagewise game. The reward function $\bar{R}:\bar{S}\times A_1\to \R$ is given by $\bar{R}((s,\sigma), a) = R_{\sigma}(s,a)$ if $s\notin F_{\sigma}$ and $0$ otherwise. The discount factor depends on the state and is given by $\bar{\gamma}(s,\sigma) = \gamma$ if $s \notin F_{\sigma}$ and $1$ otherwise; this is because a change of subtask does not invoke a step in $\M$. The initial state distribution $\bar{\eta}$ is given by $\bar{\eta}(s,\sigma) = \eta(s)\mathbbm{1}(\sigma=\sigma_0)$. A run of the game is a sequence $\bar{\rho} = \bar{s}_0a_0^1a_0^2\bar{s}_1a_1^1a_1^2\ldots$ where $\bar{s}_t\in\bar{S}$ and $a_t^i\in A_i$.

A (deterministic) policy for agent $i$ is a function ${\pi}_i:\bar{S}\to A_i$. Given policies $\pi_1$ and $\pi_2$ for agents 1 and 2, respectively and a state $\bar{s}\in\bar{S}$ we denote by
$\D^{\G}_{\bar{s}}({\pi}_1,{\pi}_2)$ the distribution over runs generated by ${\pi}_1$ and ${\pi}_2$ starting at $\bar{s}$. Then, the value of a state $\bar{s}$ is defined by
$$V^{{\pi}_1, {\pi}_2}(\bar{s}) = \E_{\bar{\rho}\sim\D^{\G}_{\bar{s}}({\pi}_1,{\pi}_2)}\Big[\sum_{t=0}^{\infty}\big(\prod_{k=0}^{t-1}\bar{\gamma}(\bar{s}_k)\big)\bar{R}(\bar{s}_t,a_t^1)\Big].$$
We are interested in computing a policy $\pi_1^*$ maximizing $$J_\G(\pi_1) = \E_{\bar{s}\sim\bar{\eta}}[\min_{\pi_2}V^{\pi_1,\pi_2}(\bar{s})].$$

Given a policy ${\pi}_1$ for agent 1, we can construct a policy $\pi_{\sigma}$ for any subtask $\sigma$ given by $\pi_{\sigma}(s) = {\pi}_1(s,\sigma)$; we denote by $\Pi({\pi}_1)$ the set of subtask policies constructed this way. The following theorem connects the objective of the game with our multi-task learning objective; all proofs are in Appendix~\ref{sec:proofs}.
\begin{theorem}\label{thm:reduction}
For any policy $\pi_1$ of agent 1 in $\G$, we have $J(\Pi(\pi_1))\geq J_\G(\pi_1).$
\end{theorem}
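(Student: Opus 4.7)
I would show that every task $\tau \in \T$ can be ``simulated'' by an adversary policy in $\G$ whose induced game value equals the reward of $\Pi(\pi_1)$ on $\tau$. Let $V_\tau(\pi_1) := \E_{\rho \sim \D_\tau^{\Pi(\pi_1)}}[\sum_{t=0}^\infty \gamma^t R_{\tau[i_t]}(s_t, \pi_{\tau[i_t]}(s_t))]$, so that $J(\Pi(\pi_1)) = \inf_\tau V_\tau(\pi_1)$. Since $J_\G(\pi_1)$ is a minimum over adversary policies, producing an adversary that realizes $V_\tau(\pi_1)$ will yield $V_\tau(\pi_1) \geq J_\G(\pi_1)$ for every $\tau$, and taking $\inf_\tau$ closes the argument.

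Fix $\pi_1$ and $\tau = \sigma_0 \sigma_1 \sigma_2 \ldots$. The central construction is a (possibly history-dependent) adversary policy $\pi_2^\tau$ that, at the $k$-th visit to an $S_2$ state along a game run, selects the subtask $\tau[k]$. With $\pi_1$ fixed, agent $2$ faces an MDP, so the standard MDP fact that optimal Markov policies exist guarantees $\min_{\pi_2}$ over Markov policies coincides with $\min_{\pi_2}$ over history-dependent ones, making the use of $\pi_2^\tau$ legitimate under either reading of the theorem.

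Next I would set up a measure-preserving correspondence between runs $\bar{\rho}$ of $\G$ under $(\pi_1, \pi_2^\tau)$ starting from $(s_0, \sigma_0) \sim \bar{\eta}$ and configuration sequences $\rho$ of $\M$ under $\Pi(\pi_1)$ with task $\tau$ starting from $(s_0, 0) \sim \Delta$. Each agent-$1$ step of $\bar{\rho}$ that stays inside a subtask maps to a non-jump step of $\rho$; each agent-$2$ step of $\bar{\rho}$ is ``collapsed'' with its preceding agent-$1$ step, realizing the jump transition $\sum_{s''} P(s'' \mid s, a) T_{\tau[i]}(s' \mid s'')$ of $\M$ as the marginal over the intermediate $s''$ of two consecutive transitions in $\G$. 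Probabilities agree by direct inspection of $\bar{P}$ and the construction of $\pi_2^\tau$. For the rewards, the key observation is that at $S_2$ states $\bar{\gamma} = 1$ and $\bar{R} = 0$, so these ``padding'' states contribute nothing and leave the accumulated discount unchanged; at $S_1$ states $\bar{\gamma} = \gamma$ and $\bar{R}((s, \sigma), a) = R_\sigma(s, a) = R_{\tau[i]}(s, a)$ because $\pi_2^\tau$ enforces $\sigma = \tau[i]$. Integrating, $V_\tau(\pi_1) = \E_{\bar{s} \sim \bar{\eta}}[V^{\pi_1, \pi_2^\tau}(\bar{s})] \geq \E_{\bar{s} \sim \bar{\eta}}[\min_{\pi_2} V^{\pi_1, \pi_2}(\bar{s})] = J_\G(\pi_1)$, and taking $\inf_\tau$ on the left finishes the proof.

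The main obstacle will be the bookkeeping of the trajectory correspondence, in particular tracking which indices of the game run are agent-$2$ ``padding'' inserted between consecutive $\M$-configurations, and verifying that the stagewise discount schedule $\bar{\gamma}$ (with $\bar{\gamma} = 1$ at $S_2$ states) exactly reproduces the uniform $\gamma^t$ weighting that appears in $V_\tau(\pi_1)$; once that is done, the inequality follows cleanly.
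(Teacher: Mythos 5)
Your proposal is correct and takes essentially the same route as the paper: the paper also constructs a history-dependent adversary $\pi_2^\tau$ that plays out the fixed task $\tau$, establishes the collapsed-trajectory correspondence (packaged there as the auxiliary MDP $\G(\pi_1)$ and Lemma~\ref{lem:gpi}, using exactly your observation that $S_2$ states carry zero reward and discount $1$), and invokes the existence of an optimal stationary policy to pass from the supremum over tasks to the minimum over stationary adversaries before taking $\inf_\tau$. The only difference is cosmetic bookkeeping: the paper merges each $S_2$ step with the \emph{following} environment step rather than the preceding one.
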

Therefore, $J_{\G}(\pi_1)$ is a lower bound on the objective $J(\Pi(\pi_1))$ which we seek to maximize. Now, let us define the optimal value of a state $\bar{s}$ by
$V^*(\bar{s}) = \max_{{\pi}_1}\min_{{\pi}_2}V^{{\pi}_1, {\pi}_2}(\bar{s})$. The following theorem shows that it is possible to construct a policy $\pi_1^*$ that maximizes $J_{\G}(\pi_1)$ from the optimal value function $V^*$.

\begin{theorem}\label{thm:opt_policy}
For any policy $\pi_1^*$ such that for all $(s,\sigma)\in S_1$, $\pi_1^*(s,\sigma)\in\operatorname{\arg\max}_{a\in A}\Big\{\bar{R}((s,\sigma), a) + \gamma\cdot\sum_{s'\in S}P(s'\mid s,a){V}^*(s',\sigma)\Big\},$
we have that $\pi_1^*\in\arg\max_{\pi_1}J_\G(\pi_1)$.
\end{theorem}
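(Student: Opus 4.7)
The plan is the standard dynamic-programming / saddle-point argument for the turn-based zero-sum Markov game $\G$. First I would write down Bellman optimality equations for $V^*$: on states $(s,\sigma)\in S_1$ the value satisfies the max-equation inside the bracket in the theorem statement, and on states $(s,\sigma)\in S_2$ (where $s\in F_\sigma$) it satisfies $V^*(s,\sigma)=\min_{a_2\in\Sigma}\sum_{s'\in S}T_\sigma(s'\mid s)\,V^*(s',a_2)$. The standing assumption that $T_\sigma(s'\mid s)=0$ whenever $s'\in F_{\sigma'}$ for some $\sigma'$ guarantees that every $S_2$-state jumps directly to an $S_1$-state, so each consecutive $S_2$-then-$S_1$ pair of game steps accumulates an effective discount of exactly $\gamma$ (since $\bar\gamma=1$ on $S_2$ and $\bar\gamma=\gamma$ on $S_1$). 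Collapsing the two half-steps into one composite Bellman operator on $S_1$ yields a $\gamma$-contraction in sup-norm, so $V^*$ exists as its unique fixed point and deterministic optimal policies for both agents exist---this is Shapley's argument specialized to the turn-based case.

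With $V^*$ in hand I exhibit a saddle point. Let $\pi_1^*$ be as in the theorem, and define an adversary policy $\pi_2^*$ by $\pi_2^*(s,\sigma)\in\arg\min_{a_2\in\Sigma}\sum_{s'\in S}T_\sigma(s'\mid s)V^*(s',a_2)$ for each $(s,\sigma)\in S_2$. By construction $(\pi_1^*,\pi_2^*)$ satisfies the evaluation Bellman equations for $V^*$ pointwise, so by uniqueness $V^{\pi_1^*,\pi_2^*}=V^*$. A one-step-deviation induction then yields $V^{\pi_1^*,\pi_2}\ge V^*\ge V^{\pi_1,\pi_2^*}$ pointwise for every pair $(\pi_1,\pi_2)$: on $S_1$-states the greedy choice of $\pi_1^*$ upper-bounds any alternative via the max-equation, while on $S_2$-states an arbitrary adversary can only produce a value at least as large as the minimum defining $V^*$. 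Consequently $\min_{\pi_2}V^{\pi_1^*,\pi_2}(\bar s)=V^*(\bar s)$ for every $\bar s$, whereas $\min_{\pi_2}V^{\pi_1,\pi_2}(\bar s)\le V^*(\bar s)$ for any other $\pi_1$ directly from the definition of $V^*$. Taking expectation under $\bar\eta$ gives $J_\G(\pi_1^*)\ge J_\G(\pi_1)$ for every $\pi_1$, which is the claim.

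The main obstacle I anticipate is not any single step but the careful handling of the state-dependent discount $\bar\gamma$ when arguing contractivity and hence well-definedness of the Bellman equations; the clean workaround is the two-step collapse described above, which reduces the analysis to a standard Banach fixed-point argument on $S_1$-states. A secondary subtlety is to justify that deterministic adversary policies suffice so that the $\min$ in $J_\G$ is attained; this follows from the linearity of $V^{\pi_1,\pi_2}$ in the adversary's action distribution at each $S_2$-state, so an extremal deterministic best response always exists.
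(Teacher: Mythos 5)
Your proposal is correct and follows essentially the same route as the paper: collapse the $S_2$-to-$S_1$ half-steps into a single $\gamma$-contractive Bellman operator on $S_1$, take greedy policies $\pi_1^*$ and $\pi_2^*$ for both players with respect to its fixed point, verify they form a saddle point with value $V^*$, and conclude $J_\G(\pi_1^*)=\E_{\bar s\sim\bar\eta}[V^*(\bar s)]\ge J_\G(\pi_1)$ for all $\pi_1$. The only cosmetic difference is that the paper formalizes the saddle-point step by passing to the auxiliary MDP $\G(\pi_1)$ with negated rewards and invoking standard MDP optimality (Lemmas~\ref{lem:gpi}--\ref{lem:best_policy}), whereas you verify the Bellman evaluation equations and the one-step-deviation inequalities directly in the game.
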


\subsection{Value Iteration}\label{sec:vi}
In this section, we briefly look at two value iteration algorithms to compute $V^*$ which we later adapt in
Section~\ref{sec:algorithms} to obtain learning algorithms. Let $\V=\{V:S_1\to\R\}$ denote the set of all value functions over $S_1$. Given a value function $V\in\V$ we define its extension to all of $\bar{S}$ using $\ext{V}(s,\sigma) =$
\begin{equation}\label{eq:ext}
\begin{cases}
\min_{\sigma'\in\Sigma}\sum_{s'\in S}T_\sigma(s'\mid s)V(s',\sigma') &\text{if}\ s\in F_{\sigma}\\
V(s,\sigma) &\text{otherwise.}
\end{cases}
\end{equation}
For a state $s\in F_{\sigma}$, $\ext{V}(s,\sigma)$ denotes the worst-case value (according to $V$) with respect to the possible choices of next subtask $\sigma'$. Now, we consider the Bellman operator $\F:\V\to\V$ defined by $\F(V)(s,\sigma) = $
\begin{equation}\label{eq:vi}
\max_{a\in A}\Big\{\bar{R}((s,\sigma), a) + \gamma\cdot\sum_{s'\in S}P(s'\mid s,a)\ext{V}(s',\sigma)\Big\}   
\end{equation}

for all $(s,\sigma)\in S_1$. Let us denote by $V^*\downarrow_{S_1}$ the restriction of $V^*$ to $S_1$.
The following lemma follows straightforwardly, giving us our first value iteration procedure.
\begin{theorem}\label{thm:fp}
$\F$ is a contraction mapping with respect to the $\ell_{\infty}$-norm on $\V$ and $V^*\downarrow_{S_1}$ is the unique fixed point of $\F$ with $\lim_{n\to\infty}\F^n(V)=V^*\downarrow_{S_1}$ for all $V\in\V$.
\end{theorem}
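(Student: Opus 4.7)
The plan is to invoke the Banach fixed-point theorem: proving $\F$ is a $\gamma$-contraction immediately gives uniqueness of the fixed point together with convergence $\F^n(V)\to V^*\downarrow_{S_1}$ for every starting $V\in\V$, and a separate verification that $V^*\downarrow_{S_1}$ actually satisfies $\F(V^*\downarrow_{S_1})=V^*\downarrow_{S_1}$ then pins down the limit.

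First I would establish a ``non-expansiveness'' lemma for the extension operator: for any $V_1,V_2\in\V$ and any $(s,\sigma)\in\bar S$,
\[
\bigl|\ext{V_1}(s,\sigma)-\ext{V_2}(s,\sigma)\bigr|\le\norm{V_1-V_2}_\infty.
\]
On $S_1$ this is immediate since $\ext{V}=V$ there; on $S_2$, the definition~\eqref{eq:ext} is a minimum over $\sigma'\in\Sigma$ of a convex combination (with respect to $T_\sigma(\cdot\mid s)$) of values of $V$, and both min and expectation are $1$-Lipschitz in the sup norm, so the standard two-sided swap argument ($\min_{\sigma'}A(\sigma')-\min_{\sigma'}B(\sigma')\le\max_{\sigma'}(A(\sigma')-B(\sigma'))$) yields the bound.

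Given this lemma, the contraction step is routine: for $(s,\sigma)\in S_1$, the map $V\mapsto \bar R((s,\sigma),a)+\gamma\sum_{s'}P(s'\mid s,a)\ext{V}(s',\sigma)$ differs between $V_1$ and $V_2$ by at most $\gamma\norm{V_1-V_2}_\infty$ for each fixed action $a$, and the same swap-of-max trick preserves the bound after taking $\max_{a\in A}$; taking the supremum over $(s,\sigma)\in S_1$ gives $\norm{\F(V_1)-\F(V_2)}_\infty\le\gamma\norm{V_1-V_2}_\infty$, which is a contraction since $\gamma\in(0,1)$. Banach's theorem then gives a unique fixed point and geometric convergence from any starting $V$.

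The remaining task — the one I expect to require the most care — is identifying that fixed point as $V^*\downarrow_{S_1}$. For this, I would appeal to the Bellman optimality characterization for stagewise zero-sum games: on $S_2$ agent 2 moves with no reward and discount $1$, so the optimal value satisfies $V^*(s,\sigma)=\min_{\sigma'\in\Sigma}\sum_{s'}T_\sigma(s'\mid s)V^*(s',\sigma')$ for $(s,\sigma)\in S_2$, which is exactly $\ext{V^*\downarrow_{S_1}}(s,\sigma)$; on $S_1$ agent 1 moves with reward $\bar R$ and discount $\gamma$, yielding $V^*(s,\sigma)=\max_a\{\bar R((s,\sigma),a)+\gamma\sum_{s'}P(s'\mid s,a)V^*(s',\sigma)\}$. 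Substituting the $S_2$-identity into the $S_1$-equation gives precisely $V^*\downarrow_{S_1}=\F(V^*\downarrow_{S_1})$. The cleanest way to justify the stagewise Bellman equation is to note that Theorem~\ref{thm:opt_policy} already assumes the existence of a value function $V^*$ satisfying the $S_1$-equation, and the $S_2$-equation follows directly from unrolling $V^{\pi_1,\pi_2}$ at an $S_2$-state (where only agent 2 acts, the stage reward is $0$, and $\bar\gamma=1$) and observing that agent 2's minimization over deterministic $\sigma'\in\Sigma$ achieves the inf. Combining contraction, uniqueness, and this verification completes the proof.
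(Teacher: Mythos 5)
The first half of your argument---non-expansiveness of $\ext{\cdot}$ followed by the $\gamma$-contraction of $\F$---is exactly the paper's Lemmas~\ref{lem:ext} and~\ref{lem:contr}, and is correct. The gap is in identifying the fixed point with $V^*\downarrow_{S_1}$. You take for granted that $V^*(\bar s)=\max_{\pi_1}\min_{\pi_2}V^{\pi_1,\pi_2}(\bar s)$ satisfies the stagewise Bellman optimality equations on $S_1$ and $S_2$, but that is precisely the nontrivial content here: a priori $V^*$ is only the lower value of the game, and interchanging $\max_{\pi_1}\min_{\pi_2}$ with a one-step backup is not automatic. Your two proposed justifications do not close this. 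Theorem~\ref{thm:opt_policy} does not assume or assert that $V^*$ satisfies the $S_1$-Bellman equation---it only defines policies greedy with respect to $V^*$---and in the paper it is itself proved \emph{from} the machinery you are trying to shortcut, so citing it here would be circular. Unrolling $V^{\pi_1,\pi_2}$ at an $S_2$-state gives a recursion for a \emph{fixed} pair of policies; passing from that to a recursion for $\max\min$ requires a saddle-point argument. There is also a technical wrinkle you would have to handle: the discount is state-dependent and equals $1$ on $S_2$, so off-the-shelf Bellman/Shapley characterizations for uniformly discounted zero-sum games do not apply directly.

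The paper closes this gap in the opposite direction: starting from the unique fixed point $V_{\lim}$ of $\F$ (obtained from Banach exactly as you propose), it constructs greedy policies $\pi_1^*$ and $\pi_2^*$ via Equations~\ref{eq:pi1_def} and~\ref{eq:pi2_def}, and shows through the auxiliary MDP $\G(\pi_1)$---which merges each $S_2$-step with the following $S_1$-step, thereby restoring a uniform discount $\gamma$ (Lemma~\ref{lem:gpi})---that $\pi_2^*$ is a best response to $\pi_1^*$ and vice versa, with common value $\ext{V_{\lim}}$ (Lemmas~\ref{lem:pi2} and~\ref{lem:pi1}). The chain of inequalities in Lemma~\ref{lem:best_policy} then yields $\max_{\pi_1}\min_{\pi_2}V^{\pi_1,\pi_2}=\min_{\pi_2}\max_{\pi_1}V^{\pi_1,\pi_2}=V^{\pi_1^*,\pi_2^*}=\ext{V_{\lim}}$, so that $V^*\downarrow_{S_1}=V_{\lim}$ is obtained as a \emph{consequence} rather than assumed. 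To repair your proof you would need to supply an argument of this kind, or an independent proof of the dynamic programming principle for this stagewise game with discount $1$ on $S_2$, rather than cite it as known.
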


\begin{algorithm}[t]
\begin{algorithmic}[1]
\Function{AsyncValueIteration}{$\M,V$}
\While{stopping criterion is met}
\For{$\sigma\in\Sigma$} \quad{\color{blue}// in parallel}
\State Compute $\W_{\sigma}(V)$
\EndFor
\State $V\gets\F_{\async}(V)$ \quad{\color{blue}// using Equation~\ref{eq:async_vi}}
\EndWhile
\EndFunction
\end{algorithmic}
\caption{Asynchronous value iteration algorithm for computing optimal subtask policies.}
\label{alg:async_vi}
\end{algorithm}

\paragraph{Asynchronous VI.} Next we consider an \emph{asynchronous} value iteration procedure which allows us to parallelize computing subtask policies for different subtasks. Given a subtask $\sigma$ and a value function $V\in\V$, we define a \emph{subtask MDP} $\M_{\sigma}^V$ which behaves like $\M$ until reaching a final state $s\in F_{\sigma}$ after which it transitions to a dead state $\bot$ achieving a reward of $\ext{V}(s,\sigma)$. Formally, $\M^V_{\sigma} = (S_{\sigma},A,P_{\sigma},R_{\sigma}^V,\gamma)$ where $S_{\sigma}=S\sqcup\{\bot\}$ with $\bot$ being a special dead state. The transition probabilities are $$P_\sigma(s'\mid s,a) = \begin{cases}
    P(s'\mid s,a) &\text{if}\ \bot\neq s\notin F_\sigma\\
    \mathbbm{1}(s'=\bot) &\text{otherwise.}
\end{cases}$$
The reward function is given by $$R_{\sigma}^V(s,a) = \begin{cases}
    R_{\sigma}(s,a) &\text{if}\ \bot\neq s\notin F_{\sigma}\\
    \ext{V}(s,\sigma) &\text{if}\ \bot\neq s\in F_{\sigma}\\
    0 &\text{otherwise.}
\end{cases}$$
We denote by $\W_{\sigma}(V)$ the optimal value function of the MDP $\M_\sigma^V$. We then define the asynchronous value iteration operator $\F_{\async}:\V\to\V$ using
\begin{equation}\label{eq:async_vi}
\F_{\async}(V)(s,\sigma) = \W_{\sigma}(V)(s).
\end{equation}

We can show that repeated application of $\F_{\async}$ leads to the optimal value function $V^*$ of the game $\G$.
\begin{theorem}\label{thm:async}
For any $V\in\V$, $\lim_{n\to\infty}\F^n_{\async}(V)\to V^*\downarrow_{S_1}$.
\end{theorem}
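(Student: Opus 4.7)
The plan is to prove the theorem by verifying the two hypotheses of Banach's fixed point theorem on $(\V,\|\cdot\|_\infty)$: that $V^*\downarrow_{S_1}$ is a fixed point of $\F_{\async}$, and that $\F_{\async}$ is a $\gamma$-contraction.

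For the fixed point property, I would fix a subtask $\sigma$ and show that both $V^*(\cdot,\sigma)$ and $\W_\sigma(V^*\downarrow_{S_1})(\cdot)$ are optimal values of the same discounted MDP on $\{s\in S:s\notin F_\sigma\}$. On one hand, expanding Theorem~\ref{thm:fp} at any $(s,\sigma)\in S_1$ gives
\begin{equation*}
V^*(s,\sigma)=\max_{a\in A}\Big\{R_\sigma(s,a)+\gamma\sum_{s'\in S}P(s'\mid s,a)\ext{V^*}(s',\sigma)\Big\}.
\end{equation*}
On the other hand, the construction of $\M_\sigma^{V^*\downarrow_{S_1}}$ makes $\ext{V^*}(s',\sigma)$ the one-shot reward collected upon reaching any $s'\in F_\sigma$, so the standard Bellman equation for its optimal value $\W_\sigma(V^*\downarrow_{S_1})$ on $\{s:s\notin F_\sigma\}$ has exactly the same form. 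Uniqueness of the optimal value of this $\gamma$-discounted MDP forces the two functions to agree, which gives $\F_{\async}(V^*\downarrow_{S_1})=V^*\downarrow_{S_1}$.

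For the contraction I would use a coupling argument. Pick $V_1,V_2\in\V$, fix $(s,\sigma)\in S_1$, and let $\pi^*$ be a policy optimal in $\M_\sigma^{V_1}$ at $s$. Since $\M_\sigma^{V_1}$ and $\M_\sigma^{V_2}$ differ only in the reward received at $F_\sigma$, the distribution of trajectories under $\pi^*$ is the same in both MDPs; letting $T$ be the hitting time of $F_\sigma$,
\begin{equation*}
\W_\sigma(V_1)(s)-\W_\sigma(V_2)(s)\leq\E^{\pi^*}_{s}\!\left[\gamma^{T}\big(\ext{V_1}(s_{T},\sigma)-\ext{V_2}(s_{T},\sigma)\big)\mathbbm{1}(T<\infty)\right].
\end{equation*}
Using $|\ext{V_1}-\ext{V_2}|\leq\|V_1-V_2\|_\infty$ pointwise together with $T\geq 1$ almost surely whenever $s\notin F_\sigma$, the right-hand side is bounded by $\gamma\|V_1-V_2\|_\infty$. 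Swapping $V_1$ and $V_2$ gives the symmetric inequality, so $\|\F_{\async}(V_1)-\F_{\async}(V_2)\|_\infty\leq\gamma\|V_1-V_2\|_\infty$. Banach's theorem then delivers the claimed convergence from any $V\in\V$.

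The main obstacle is obtaining a contraction coefficient strictly below $1$. A naive triangle inequality yields only coefficient $1$, because a perturbation of the terminal reward at $F_\sigma$ propagates through $\W_\sigma$ without any obvious intrinsic per-step decay. The essential rescue is that for $s\notin F_\sigma$ any trajectory in $\M_\sigma^V$ must take at least one discounted step before hitting $F_\sigma$, so the perturbation is multiplied by $\gamma^{T}\leq\gamma$. Verifying that $\ext{\cdot}$ is $1$-Lipschitz in $\|\cdot\|_\infty$ (immediate from Equation~\ref{eq:ext} since the $\min$ is $1$-Lipschitz in each argument) and that the subtask Bellman operator for a fixed $V$ is a standard $\gamma$-contraction is then routine.
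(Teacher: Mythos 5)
Your proof is correct, but it takes a genuinely different route from the paper's. The paper follows the Bertsekas--Tsitsiklis template for asynchronous value iteration: it never shows $\F_{\async}$ is a contraction, but instead exploits monotonicity of $\F$, $\F_{\async}$ and $\F_{\sigma,V}$ to prove that $V_{\lim}\leq\F_{\async}^n(V)\leq\F^n(V)$ whenever $\F(V)\leq V$ (and symmetrically for $\F(V)\geq V$), and then sandwiches an arbitrary $V$ between the sub- and super-solutions $V_{\lim}\pm\delta I$. You instead verify the two hypotheses of Banach's theorem directly: the fixed-point computation (the function equal to $V^*(\cdot,\sigma)$ off $F_\sigma$, $\ext{V^*}(\cdot,\sigma)$ on $F_\sigma$, and $0$ at $\bot$ satisfies the Bellman optimality equation of $\M_\sigma^{V^*\downarrow_{S_1}}$, hence equals $\W_\sigma(V^*\downarrow_{S_1})$) is sound, and the coupling argument for the contraction is the key new ingredient: since the two subtask MDPs share the same kernel $P_\sigma$ and differ only in the terminal reward on $F_\sigma$, comparing both under a policy optimal for one of them cancels all pre-hitting rewards, and the fact that the hitting time of $F_\sigma$ from any $s\notin F_\sigma$ is at least $1$ supplies the factor $\gamma$ (together with the $1$-Lipschitzness of $\ext{\cdot}$ from Lemma~\ref{lem:ext}). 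Your argument buys a quantitative geometric rate $\gamma^n$ and uniqueness of the limit for free, and it adapts to $\F_m$ with only minor changes; the paper's monotonicity argument avoids any appeal to existence of optimal policies in the subtask MDPs and handles $\F_{\async}$ and the partial-update operators $\F_m$ uniformly in a single induction, which is presumably why the authors chose it. One small point of care in your write-up: in the finite-state setting the optimal stationary policy $\pi^*$ for $\M_\sigma^{V_1}$ exists, but if one wanted the argument in more generality one would replace it by $\varepsilon$-optimal policies and let $\varepsilon\to 0$; this does not affect the conclusion.
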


Since each $\W_{\sigma}(V)$ can be computed independently, we can parallelize the computation of $\F_{\async}$ giving us the algorithm in Algorithm~\ref{alg:async_vi}. We can also show that it is not necessary to compute $\W_{\sigma}(V)$ exactly. Let $\V_{\sigma} = \{\bar{V}:S_\sigma\to\R\}$ be the set of all value functions over $S_{\sigma}$. For a fixed $V\in\V$, let $\F_{\sigma,V}:\V_{\sigma}\to\V_{\sigma}$ denote the usual Bellman operator for the MDP $\M_{\sigma}^V$ given by $\F_{\sigma,V}(\bar{V})(s) = $
$$\max_{a\in A}\Big\{R^V_{\sigma}(s,a) + \gamma\cdot\sum_{s'\in S_{\sigma}}P_{\sigma}(s'\mid s,a)\bar{V}(s')\Big\}$$
for all $\bar{V}\in\V_{\sigma}$ and $s\in S_{\sigma}$. Now for any $V\in\V$ and $\sigma\in\Sigma$, we define a corresponding ${V}_\sigma\in\V_{\sigma}$ using ${V}_{\sigma}(s) = \ext{V}(s,\sigma)$ if $s\in S$ and ${V}_{\sigma}(\bot) = 0$. Then, for any integer $m>0$ and $V\in\V$, we can use $\F_{\sigma,V}^m({V}_\sigma)$ as an approximation to $\W_{\sigma}(V)$. Let us define $\F_m:\V\to\V$ using
$$\F_m(V)(s,\sigma) = \F_{\sigma,V}^m({V}_\sigma)(s).$$
Intuitively, $\F_m$ corresponds to performing $m$ steps of value iteration in each subtask MDP $\M_{\sigma}^V$ (which can be parallelized) starting at $V_{\sigma}$. The following theorem guarantees convergence when using $\F_m$ instead of $\F_{\async}$.

\begin{theorem}\label{thm:async_partial}
For any $V\in\V$ and $m>0$, $\lim_{n\to\infty}\F^n_{m}(V)\to V^*\downarrow_{S_1}$.
\end{theorem}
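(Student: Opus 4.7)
The plan is to prove Theorem~\ref{thm:async_partial} by a standard Banach fixed-point argument: show (i) that $V^*\downarrow_{S_1}$ is a fixed point of $\F_m$, and (ii) that $\F_m$ is a $\gamma$-contraction on $\V$ under the $\ell_\infty$-norm. Together these immediately give $\F_m^n(V)\to V^*\downarrow_{S_1}$ for every initial $V\in\V$.

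For (i), I would set $V = V^*\downarrow_{S_1}$ and first note that the induced $V_\sigma\in\V_\sigma$ coincides with $V^*$ on $S$ (and is $0$ at $\bot$): at $s\notin F_\sigma$ this is immediate since $\ext{V^*\downarrow_{S_1}}(s,\sigma) = V^*(s,\sigma)$, and at $s\in F_\sigma$ it follows from the game's Bellman equation at the $S_2$-state $(s,\sigma)$, which gives $V^*(s,\sigma)=\min_{\sigma'}\sum_{s'}T_\sigma(s'\mid s)V^*(s',\sigma')$. A single application of $\F_{\sigma,V}$ then fixes $V_\sigma$: at final states by the definition of $R_\sigma^V$ together with the forced transition to $\bot$; and at non-final states by the Bellman optimality equation from Theorem~\ref{thm:opt_policy}. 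Hence $\F_{\sigma,V}^m(V_\sigma) = V_\sigma$ for every $m$ and every $\sigma$, so $\F_m(V^*\downarrow_{S_1}) = V^*\downarrow_{S_1}$.

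For (ii), I would fix $V,V'\in\V$ and, for each $\sigma$, track the inner sequences $\bar{V}_k = \F_{\sigma,V}^k(V_\sigma)$ and $\bar{V}'_k = \F_{\sigma,V'}^k(V'_\sigma)$ on $S_\sigma$. The key observation is that for every $k\geq 0$ one has $\bar{V}_k(\bot)=0$ and $\bar{V}_k(s)=\ext{V}(s,\sigma)$ for $s\in F_\sigma$ (and similarly with primes), since such states immediately transition to $\bot$ and accrue only the one-shot reward $\ext{V}(s,\sigma)$. Only non-final states are updated during the inner iteration. Writing $\delta_k = \sup_{s\notin F_\sigma}|\bar{V}_k(s)-\bar{V}'_k(s)|$ and $D = \|\ext{V}-\ext{V'}\|_\infty$, splitting the Bellman successor sum into its $F_\sigma$ and non-$F_\sigma$ parts yields the recursion $\delta_k\leq \gamma\max(\delta_{k-1},D)$ for $k\geq 1$. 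A short induction then gives $\delta_m\leq \max(\gamma^m\delta_0,\gamma D)$. Since $\delta_0\leq \|V-V'\|_\infty$ and $D\leq \|V-V'\|_\infty$ (the latter uses $|\min_{\sigma'}f-\min_{\sigma'}g|\leq \|f-g\|_\infty$ on the min in the definition of $\ext{\cdot}$ at $F_\sigma$-states), I obtain $\delta_m\leq \gamma\|V-V'\|_\infty$. Taking the max over $\sigma$ yields $\|\F_m(V)-\F_m(V')\|_\infty\leq \gamma\|V-V'\|_\infty$.

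The main obstacle is the inner-iteration bookkeeping: unlike $\F$ or $\F_{\async}$, the operator $\F_m$ holds the final-state backups frozen at $\ext{V}$ throughout all $m$ sweeps while the non-final values are repeatedly updated, so one cannot just invoke Banach inside a single subtask MDP. The recursion $\delta_k\leq \gamma\max(\delta_{k-1},D)$ is precisely the device that absorbs this "stale" contribution into a single $\gamma$ factor. Once contractivity is in hand, Banach's fixed-point theorem together with step (i) closes the argument.
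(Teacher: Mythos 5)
Your proof is correct, but it takes a genuinely different route from the paper's. The paper proves Theorems~\ref{thm:async} and~\ref{thm:async_partial} together via the Bertsekas--Tsitsiklis asynchronous value iteration argument: it establishes monotonicity of $\ext{\cdot}$, $\F$, $\F_{\async}$ and $\F_{\sigma,V}$, shows for any $V$ with $\F(V)\leq V$ the sandwich $V_{\lim}\leq \F_{\async}^n(V)\leq \F^n(V)$ (and the analogue for $\F_m$), handles $\F(V)\geq V$ symmetrically, and then squeezes a general $V$ between $V_{\lim}\pm\delta I$. You instead prove that $\F_m$ is itself a $\gamma$-contraction on $(\V,\norm{\cdot})$ with $V^*\downarrow_{S_1}$ as a fixed point and invoke Banach directly; the key device is correctly identifying that the inner iterates keep the $F_\sigma$-states frozen at $\ext{V}(\cdot,\sigma)$ and $\bot$ at $0$, so the recursion $\delta_k\leq\gamma\max(\delta_{k-1},D)$ with $D=\norm{\ext{V}-\ext{V'}}\leq\norm{V-V'}$ (Lemma~\ref{lem:ext}) absorbs the stale boundary values into a single $\gamma$. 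I verified both halves: the fixed-point computation at final states, at $\bot$, and at non-final states (where the relevant fact is $\F(V^*\downarrow_{S_1})=V^*\downarrow_{S_1}$ from Theorem~\ref{thm:fp} rather than Theorem~\ref{thm:opt_policy}, a harmless mis-citation), and the bound $\delta_m\leq\max(\gamma^m\delta_0,\gamma D)\leq\gamma\norm{V-V'}$ for $m\geq 1$. What each approach buys: the paper's monotonicity argument treats $\F_{\async}$ and $\F_m$ uniformly (morally $\F_{\async}=\F_\infty$) and reuses only the contraction of $\F$, at the cost of the case split and sandwich; your argument requires the inner-iteration bookkeeping but is self-contained, yields the explicit geometric rate $\norm{\F_m^n(V)-V^*\downarrow_{S_1}}\leq\gamma^n\norm{V-V^*\downarrow_{S_1}}$, and in fact extends to $\F_{\async}$ by letting $m\to\infty$ in your bound.
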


\section{Learning Algorithms}\label{sec:algorithms}
In this section, we present RL algorithms for solving the game $\G$. We first consider the finite MDP setting for which we can obtain a modified $Q$-learning algorithm with a convergence guarantee. We then present two algorithms based on Soft Actor Critic (\textsc{Sac}) for the continuous-state setting.

\subsection{Finite MDP}
Assuming finite states and actions, we can obtain a $Q$-learning variant for solving $\G$ which we call \emph{Robust Option $Q$-learning}. We assume that jump transitions $T$ are known to the learner; this is usually the case since jump transitions are used to model subtask transitions and (potential) change-of-coordinates within the controller. However, we believe that the algorithm can be easily extended to the scenario where $T$ is unknown.

We maintain a function $Q:S_1\times A\to\R$ with $Q(s,\sigma,a)$ denoting $Q((s,\sigma), a)$. The corresponding value function $V_Q$ is defined using $V_Q(s,\sigma) = \max_{a\in A}Q(s,\sigma, a)$ and is extended to all of $\bar{S}$ as $\ext{V_Q}$. Note that, given a $Q$-function, the extended value function $\ext{V_Q}$ can be computed exactly. Robust Option $Q$-learning is an iterative process---in each iteration $t$, it takes a step $((s,\sigma), a_1, a_2, (s',\sigma))$ in $\G$ with $(s,\sigma)\in S_1$ and updates the $Q$-function using
\begin{align*}\label{eq:qlearning}
Q_{t+1}(s,\sigma, a_1)\gets (1-&\alpha_t)Q_t(s,\sigma,a_1)\\ +&\  \alpha_t (\bar{R}((s,\sigma),a_1) + \gamma \ext{V_{Q_t}}(s',\sigma)).
\end{align*}
where $Q_t$ is the $Q$-function in iteration $t$ and $\ext{V_{Q_t}}$ is the corresponding extended value function.

% \begin{algorithm}[t]
% \begin{algorithmic}[1]
% \Function{QLearn}{$\bar{\pi}_1,\bar{\pi}_2, \eta_\G, \alpha$}
% \State Initialize $Q$-function $Q_1$
% \State Sample $(s,\sigma)\sim\eta_\G$
% \For{$t \in \{1,2,\ldots\}$}
% \State $a\gets\bar{\pi}_1^{Q_t}(s,\sigma)$
% \State $s'\sim P(s,a,\cdot)$
% \If{$s'\notin F_{\sigma}$}
% \State $v_t\gets\max_{a\in A}Q((s',\sigma), a)$
% \Else 
% \State $v_t\gets\min_{\sigma'\in\Sigma}\sum_{s''\in S}T_{\sigma}(s',s'')\big(\max_{a\in A}Q_t((s'',\sigma'), a)\big)$
% \EndIf
% \State $Q_{t+1}((s,\sigma), a)\xleftarrow{\alpha_t} \bar{R}((s,\sigma),a) + \gamma v_t$ \quad{\color{blue}// Equation~\ref{eq:qlearning}}
% \If{$s'\notin F_{\sigma}$}
% \State $s\gets s'$
% \Else
% \State $s\sim T_{\sigma}(s',\cdot)$ and $\sigma\gets \bar{\pi}_2^{Q_{t+1}}(s',\sigma)$
% \EndIf
% \EndFor
% \EndFunction
% \end{algorithmic}
% \caption{Option $Q$-learning.\\
% Inputs: Exploration strategies $\bar{\pi}_1,\bar{\pi}_2$, initial distribution $\eta_\G$ and learning rate schedule $\alpha$.}
% \label{alg:qlearning}
% \end{algorithm}

%we use $\bar{\pi}_1^Q$ and $\bar{\pi}_2^Q$ to denote exploration policies in $\G$ for the two players respectively; they can depend on the current $Q$ function, e.g., $\varepsilon$-greedy.
% The distribution over $\bar{S}$ is given by $\eta_\G$ which, for example, could denote sampling $s\sim\eta$ and $\sigma\sim\code{Uniform}(\Sigma)$ independently. The algorithm maintains the invariant that $s\notin F_{\sigma}$.
Under standard assumptions on the learning rates $\alpha_t$, similar to $Q$-learning, we can show that Robust Option $Q$-learning converges to the optimal $Q$-function almost surely. Here, the optimal $Q$-function is defined by $Q^*(s,\sigma,a) = \bar{R}((s,\sigma), a) + \gamma\sum_{s'\in S}P(s'\mid s,a)V^*(s',\sigma)$ for all $(s,\sigma)\in S_1$. Let $\alpha_t(s,\sigma,a)$ denote the learning rate used in iteration $t$ if $Q_t(s,\sigma,a)$ is updated and $0$ otherwise. Then, we have the following theorem.

\begin{theorem}\label{thm:q_learning}
If $\sum_t\alpha_t(s,\sigma,a)=\infty$ and $\sum_t\alpha_t^2(s,\sigma,a)<\infty$ for all $(s,\sigma)\in S_1$ and $a\in A$, then $\lim_{t\to\infty}Q_t = Q^*$ almost surely.
\end{theorem}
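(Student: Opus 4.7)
The plan is to cast the update as a stochastic approximation scheme whose expected operator is a $\gamma$-contraction on $Q$-functions, and then invoke the classical convergence theorem for such schemes (e.g., Jaakkola--Jordan--Singh, or Proposition~4.4 of Bertsekas--Tsitsiklis). Concretely, I will introduce the operator $H:(S_1\times A\to\R)\to(S_1\times A\to\R)$ defined by
\[
H(Q)(s,\sigma,a) \;=\; \bar R((s,\sigma),a) + \gamma\sum_{s'\in S}P(s'\mid s,a)\,\ext{V_Q}(s',\sigma),
\]
so that the update can be rewritten as $Q_{t+1}(s,\sigma,a) = (1-\alpha_t(s,\sigma,a))Q_t(s,\sigma,a) + \alpha_t(s,\sigma,a)\bigl(H(Q_t)(s,\sigma,a) + w_t(s,\sigma,a)\bigr)$, where $w_t = \gamma\bigl(\ext{V_{Q_t}}(s',\sigma) - \sum_{s''}P(s''\mid s,a)\ext{V_{Q_t}}(s'',\sigma)\bigr)$ is the sampling noise from drawing $s'\sim P(\cdot\mid s,a)$.

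\textbf{Contraction.} I will show $\|H(Q)-H(Q')\|_\infty \le \gamma\|Q-Q'\|_\infty$. This reduces, via the triangle inequality and $|\max_a f(a)-\max_a g(a)|\le \max_a|f(a)-g(a)|$, to showing that the map $Q\mapsto \ext{V_Q}$ is nonexpansive in the sup norm. For $s\notin F_\sigma$ this is immediate from nonexpansiveness of $\max$; for $s\in F_\sigma$ it follows from nonexpansiveness of $\min$ combined with $T_\sigma(\cdot\mid s)$ being a probability distribution supported on $S\setminus\bigcup_{\sigma'}F_{\sigma'}$, where $V_Q$ (rather than the extension) is evaluated. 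This also shows $H$ is essentially the $Q$-form of the Bellman operator $\F$ from Theorem~\ref{thm:fp}.

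\textbf{Fixed point.} I will verify that $Q^*$ is the unique fixed point of $H$. Using the definition of $Q^*$ it suffices to check $V^*(s',\sigma) = \ext{V_{Q^*}}(s',\sigma)$ for all $s'\in S$ and $\sigma\in\Sigma$. On $S_1$ this holds because $V^*(s',\sigma)=\max_a Q^*(s',\sigma,a)=V_{Q^*}(s',\sigma)$ by the Bellman equation for $V^*\downarrow_{S_1}=\F(V^*\downarrow_{S_1})$ from Theorem~\ref{thm:fp}; on $S_2$ it holds because the stagewise structure of $\G$ gives $V^*(s,\sigma)=\min_{\sigma'}\sum_{s''}T_\sigma(s''\mid s)V^*(s'',\sigma')$, and the jump transitions land in $S_1$ so we can replace $V^*$ with $V_{Q^*}$, matching~\eqref{eq:ext}.

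\textbf{Applying the stochastic approximation theorem.} Conditional on the history through step $t$, $\E[w_t\mid\mathcal{F}_t]=0$, and $w_t$ is bounded (hence has bounded conditional variance) because $S$, $A$, $\Sigma$ are finite, rewards are bounded, and $\|\ext{V_Q}\|_\infty\le\|Q\|_\infty$. Combined with the Robbins--Monro conditions $\sum_t\alpha_t(s,\sigma,a)=\infty$, $\sum_t\alpha_t(s,\sigma,a)^2<\infty$ from the hypothesis, the standard result yields $Q_t\to Q^*$ a.s.\ componentwise, which is the claim.

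The main technical subtlety I anticipate is the $\min$ over adversary actions appearing inside $\ext{\cdot}$: unlike vanilla $Q$-learning, the target depends on $Q_t$ at other $(\sigma')$-slices through the $\min$, so I have to be careful that nonexpansiveness still holds uniformly in $(s,\sigma)$ and that the noise term remains a martingale difference with respect to the natural filtration. Both go through because the $\min$ is over the finite set $\Sigma$ and is computable deterministically from $Q_t$ (the jump probabilities $T_\sigma$ are known), so all the stochasticity in $w_t$ comes from the single sample $s'\sim P(\cdot\mid s,a)$, exactly as in standard $Q$-learning.
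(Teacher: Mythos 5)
Your proposal matches the paper's proof essentially step for step: the same operator $\H$, the same rewriting of the update with martingale-difference noise $w_t$ arising from the single sample $\tilde s\sim P(\cdot\mid s,a)$, the same reduction of the contraction property to nonexpansiveness of $Q\mapsto\ext{V_Q}$ (the paper's Lemma~\ref{lem:ext}), and the same appeal to Proposition~4.4 of Bertsekas--Tsitsiklis. The only cosmetic difference is that the paper states the conditional second-moment bound explicitly as $4\gamma^2\max Q_t^2$ rather than via your observation $\norm{\ext{V_Q}}\leq\norm{Q}$, which amounts to the same thing.
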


\subsection{Continuous States and Actions}
\begin{algorithm}[t]
\begin{algorithmic}[1]
\Function{Rosac}{$\alpha_\psi$, $\alpha_\theta$, $\beta$, $\delta$}
\State Initialize params $\{\psi_{\sigma}\}_{\sigma\in\Sigma}$, $\{\psi_{\sigma}^{\targ}\}_{\sigma\in\Sigma}$, $\{\theta_{\sigma}\}_{\sigma\in\Sigma}$
\State Initialize replay buffer $\B$
\For{each iteration}
\For{each episode}
\State $s_0\sim \eta$
\State $\sigma_0\gets\code{InitialSubtask}$
\For{each step $t$}
\State $a_t\sim\pi_{\theta_{\sigma_t}}(\cdot\mid s_t)$
\State $s_{t+1}\sim P(\cdot\mid s,a)$
\State $\B\gets \B\cup\{(s_t,a_t,s_{t+1})\}$
\If{$s_{t+1}\in F_{\sigma_t}$}
\State $s_{t+1}\sim T_{\sigma_t}(\cdot\mid s_{t+1})$
\State $\sigma_{t+1}\gets\code{Greedy}(\varepsilon, \arg\min_{\sigma}\tilde{V}(s_{t+1},\sigma), \Sigma)$\label{line:argmin}
\Else
\State $\sigma_{t+1}\gets\sigma_t$
\EndIf
\EndFor
\EndFor
\For{each gradient step}
\State Sample batch $B\sim\B$
\For{$\sigma\in\Sigma$}
\State $\psi_{\sigma}\gets\psi_{\sigma}-\alpha_\psi\nabla_{\psi_{\sigma}}\L_{Q}(\psi_{\sigma},B)$
\State $\theta_{\sigma}\gets\theta_{\sigma}-\alpha_\theta\nabla_{\theta_{\sigma}}\L_{\pi}(\theta_{\sigma},B)$
\State $\psi_{\sigma}^{\targ}\gets\delta\psi_{\sigma} + (1-\delta)\psi_{\sigma}^{\targ}$
\EndFor
\EndFor
\EndFor
\EndFunction
\end{algorithmic}
\caption{Robust Option Soft Actor Critic.\\
Inputs: Learning rates $\alpha_{\psi}$, $\alpha_\theta$, entropy weight $\beta$ and Polyak rate $\delta$.}
\label{alg:osac}
\end{algorithm}

In the case of continuous states and actions, we can adapt any $Q$-function based RL algorithm such as  Deep Deterministic Policy Gradients (\textsc{Ddpg})~\citep{lillicrap2016continuous} or Soft Actor Critic (\textsc{Sac})~\citep{haarnoja2018soft} to our setting. Here we present an \textsc{Sac} based algorithm that we call Robust Option \textsc{Sac} (\textsc{Rosac}) which is outlined in Algorithm~\ref{alg:osac}. This algorithm, like \textsc{Sac}, adds an entropy bonus to the reward function to improve exploration.

We maintain two $Q$-functions for each subtask $\sigma$, $Q_{\psi_{\sigma}}:S\to\R$ parameterized by $\psi_{\sigma}$ and a target function $Q_{\psi_{\sigma}^{\targ}}$ parameterized by $\psi_{\sigma}^{\targ}$. We also maintain a stochastic subtask policy $\pi_{\theta_{\sigma}}:S\to \D(A)$ for each subtask $\sigma$ where $\D(A)$ denotes the set of distributions over $A$. Given a step $(s,a,s')$ in $\M$ and a subtask $\sigma$ with $s\notin F_{\sigma}$, we define the target value by
$$y_{\sigma}(s, a, s') = {R}_{\sigma}(s,a) + \gamma \ext{V}(s',\sigma)$$
where the value $\ext{V}(s',\sigma)$ is estimated using $\tilde{V}(s',\sigma)= Q_{\psi_{\sigma}^{\targ}}(s',\tilde{a}) - \beta\log\pi_{\theta_{\sigma}}(\tilde{a}\mid s')$ with $\tilde{a}\sim\pi_{\theta_{\sigma}}(\cdot\mid s')$ if $s'\notin F_{\sigma}$. If $s'\in F_{\sigma}$, we estimate $\ext{V}(s',\sigma)$ using $\tilde{V}(s',\sigma)=\min_{\sigma'\in\Sigma}\tilde{V}(s'',\sigma')$ where $\tilde{V}(s'',\sigma') = Q_{\psi_{\sigma'}^{\targ}}(s'',\tilde{a}) - \beta\log\pi_{\theta_{\sigma'}}(\tilde{a}\mid s'')$ with $\tilde{a}\sim\pi_{\theta_{\sigma'}}(\cdot\mid s'')$ and $s''\sim T_{\sigma}(\cdot\mid s')$. Now, given a batch $B=\{(s,a,s')\}$ of steps in $\M$ we update ${\psi_{\sigma}}$ using one step of gradient descent corresponding to the loss $\L_Q(\psi_{\sigma}, B) = $
$$\frac{1}{|B|}\sum_{(s,a,s')\in B}(Q_{\psi_{\sigma}}(s,a) - y_{\sigma}(s,a,s'))^2$$
and the subtask policy $\pi_{\theta_{\sigma}}$ is updated using the loss $\L_{\pi}(\theta_{\sigma}, B) = $
$$-\frac{1}{|B|}\sum_{(s,a,s')\in B}\displaystyle\E_{\tilde{a}\sim\pi_{\theta_{\sigma}}(\cdot\mid s)}\big[Q_{\psi_{\sigma}}(s,\tilde{a}) - \beta\log\pi_{\theta_{\sigma}}(\tilde{a}\mid s)\big].$$
The gradient $\nabla_{\theta_{\sigma}}\L_{\pi}(\theta_{\sigma}, B)$ can be estimated using the reparametrization trick if, for example, $\pi_{\theta_{\sigma}}(\cdot\mid s)$ is a Gaussian distribution whose parameters are differentiable w.r.t. $\theta_{\sigma}$. We use Polyak averaging to update the target $Q$-networks $\{Q_{\psi_{
\sigma}^{\targ}} \mid \sigma\in\Sigma\}$.

Note that we do not train a separate policy for the adversary. During exploration, we use the $\varepsilon$-greedy strategy to select subtasks. We first estimate the ``worst" subtask for a state $s$ using $\tilde{\sigma}=\arg\min_{\sigma}\tilde{V}(s,\sigma)$ where $\tilde{V}(s,\sigma)$ is estimated as before. Then the function $\code{Greedy}(\varepsilon,\tilde{\sigma}, \Sigma)$ chooses $\tilde{\sigma}$ with probability $1-\varepsilon$ and picks a subtask uniformly at random from $\Sigma$ with probability $\varepsilon$. {Furthermore, we can easily impose constraints on the sequences of subtasks considered (e.g., only consider realistic tracks in F1/10th) by restricting the adversary to pick the next subtask $\sigma_{t+1}$ from a subset $\Sigma_{t+1}\subseteq\Sigma$ of possible substasks---i.e., by performing $\arg\min$ (in Line~\ref{line:argmin}) over $\Sigma_{t+1}$ instead of $\Sigma$.}

\paragraph{Asynchronous \textsc{Rosac}.} We can also obtain an asynchronous version of the above algorithm which lets us train subtask policies in parallel. Asynchronous Robust Option \textsc{Sac} (\textsc{Arosac}) is outlined in Algorithm~\ref{alg:aosac}. Here we use one replay buffer for each subtask. We maintain an initial state distribution $\tilde{\eta}$ over $S$ to be used for training every subtask policy $\{\pi_{\sigma}\}_{\sigma\in\Sigma}$. $\tilde{\eta}$ is represented using a finite set of states $D$ from which a state is sampled uniformly at random. The value function $\tilde{V}:S\times \Sigma\to\R$ is estimated as before. To be specific, in each iteration, an estimate of any value $\tilde{V}(s,\sigma)$ is obtained on the fly using the $Q$-functions and the subtask policies from the previous iteration.

\begin{figure}[t]
    \centering
    \includegraphics[width=0.16\textwidth]{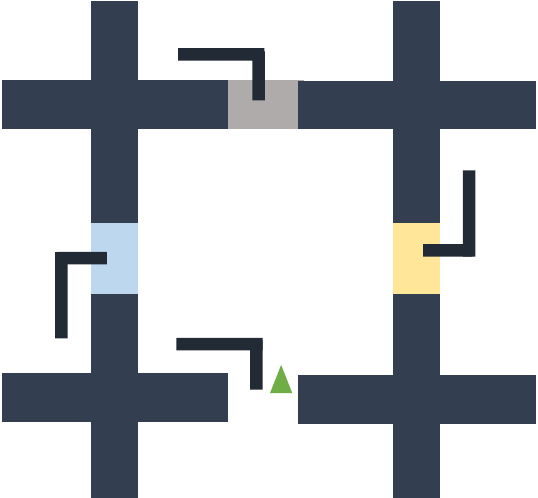}
    \caption{Rooms environment}
    \label{fig:rooms}
\end{figure}

The \textsc{Sac} subroutine runs the standard Soft Actor Critic algorithm for $N$ iterations on the subtask MDP $\M_{\sigma}^{\tilde{V}}$ (defined in Section~\ref{sec:reduction})\footnote{Note that it is possible to obtain samples from $\M_{\sigma}^{\tilde{V}}$ as long can one can obtain samples from $\M$ and membership in $F_{\sigma}$ can be decided.} with initial state distribution $\tilde{\eta}$ (defaults to $\eta$ if $D=\emptyset$). It returns the updated parameters along with states $X_{\sigma}$ visited during exploration with $X_{\sigma}\subseteq F_{\sigma}$. The states in $X_{\sigma}$ are used to update the initial state distribution for the next iteration following the Dataset Aggregation principle~\citep{ross2011reduction}.

\begin{algorithm}[t]
\begin{algorithmic}[1]
\Function{Arosac}{$\alpha$, $\beta$, $\delta$, $N$}
\State Initialize params $\Psi=\{\psi_{\sigma}\}_{\sigma\in\Sigma}$, $\Theta=\{\theta_{\sigma}\}_{\sigma\in\Sigma}$
\State Initialize target params $\Psi^{\targ}=\{\psi_{\sigma}^{\targ}\}_{\sigma\in\Sigma}$
\State Initialize replay buffers $\{\B_{\sigma}\}_{\sigma\in\Sigma}$ \State Initialize $D = \{\}$
\For{each iteration}
\State $\tilde{V}\gets\textsc{ObtainValueEstimator}(\Psi,\Theta)$
\For{$\sigma\in\Sigma$} \quad{\color{blue}// in parallel}
\State Run $\textsc{Sac}(\M_\sigma^{\tilde{V}},D,\psi_{\sigma}, \psi_{\sigma}^{\targ},\theta_{\sigma}, \alpha, \beta, \delta, N)$\label{line:vtilde}
\State Update $\psi_{\sigma}, \psi_{\sigma}^{\targ}, \theta_{\sigma}\ \text{and}\ X_\sigma$ to new values
\EndFor
\For{$\sigma\in\Sigma$}
\For{$s\in X_{\sigma}$}
\State $s'\sim T_{\sigma}(\cdot\mid s)$ and $D\gets D\cup\{s'\}$
\EndFor
\EndFor
\EndFor
\EndFunction
\end{algorithmic}
\caption{Asynchronous Robust Option \textsc{Sac}.\\
Inputs: Learning rates $\alpha$, entropy weight $\beta$, Polyak rate $\delta$ and number of \textsc{Sac} iterations $N$.}
\label{alg:aosac}
\end{algorithm}

\section{Experiments}\label{sec:experiments}
We evaluate\footnote{Our implementation is available online and can be found at \href{https://github.com/keyshor/rosac}{https://github.com/keyshor/rosac}.} our algorithms \textsc{Rosac} and \textsc{Arosac} on two multi-task environments; a rooms environment and an F1/10th racing car environment~\citep{f1tenth}.

\paragraph{Rooms environment.} 
We consider the environment shown in Figure~\ref{fig:rooms} which depicts a room with walls and exits. Initially the robot is placed in the green triangle. The L-shaped obstacles indicate walls within the room that the robot cannot cross. A state of the system is a position $(x,y)\in\R^2$ and an action is a pair $(v,\theta)$ where $v$ is the speed and $\theta$ is the heading angle to follow during the next time-step.

%The state of the system is the position $(x,y)$ of the robot and an action is given by $(v,\theta)\in\R^2$ where $v$ is the speed and $\theta$ is the heading angle.
There are three exits: left (blue), right (yellow) and up (grey) reaching each of which is a subtask. Upon reaching an exit, the robot enters another identical room where the exit is identified (via change-of-coordinates) with the bottom entry region of the current room. A task is a sequence of directions---e.g., $\code{left}\to\code{right}\to\code{up}\to\code{right}$ indicating that the robot should reach the left exit followed by the right exit in the subsequent room and so on. Although the dynamics are simple, the obstacles make learning challenging in the adversarial setting.

\begin{figure*}[t]
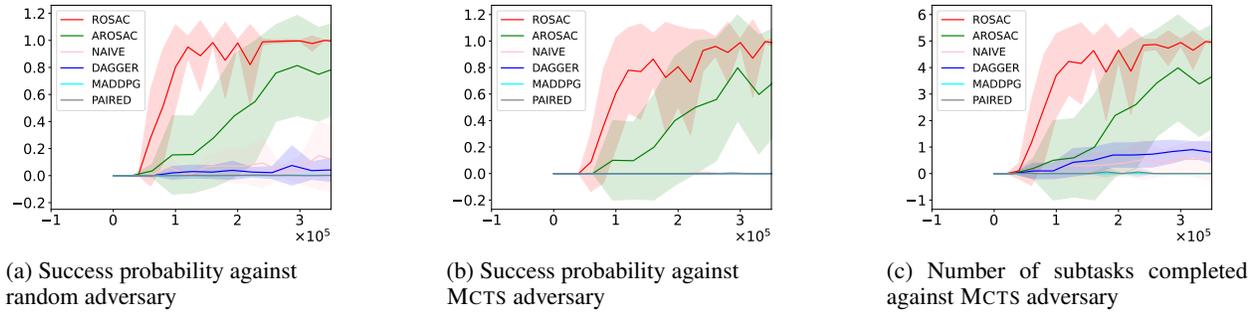

\centering
% \begin{subfigure}{0.29\textwidth}
% \includegraphics[width=\textwidth]{rooms_avg_jumps}
% \caption{Number of subtaks completed\\against random adversary}
% \end{subfigure}
\begin{subfigure}{0.28\textwidth}
\includegraphics[width=\textwidth]{rooms_avg_prob}
\caption{Success probability against\\ random adversary}
\end{subfigure}\qquad\quad
\begin{subfigure}{0.28\textwidth}
\includegraphics[width=\textwidth]{rooms_mcts_prob}
\caption{Success probability against\\\textsc{Mcts} adversary}
\end{subfigure}\qquad\quad
\begin{subfigure}{0.28\textwidth}
\includegraphics[width=\textwidth]{rooms_mcts_avg_jumps}
\caption{Number of subtasks completed against \textsc{Mcts} adversary}
\end{subfigure}
\caption{Plots for the Rooms environment. $x$-axis denotes the number of sample steps and $y$-axis denotes either the average number of subtasks completed or the probability of completing 5 subtasks. Results are averaged over 10 runs. {Error bars indicate $\pm$ std.}}
\label{fig:rooms_plots}
\end{figure*}

\begin{figure*}[t]
\centering
% \begin{subfigure}{0.29\textwidth}
% \includegraphics[width=\textwidth]{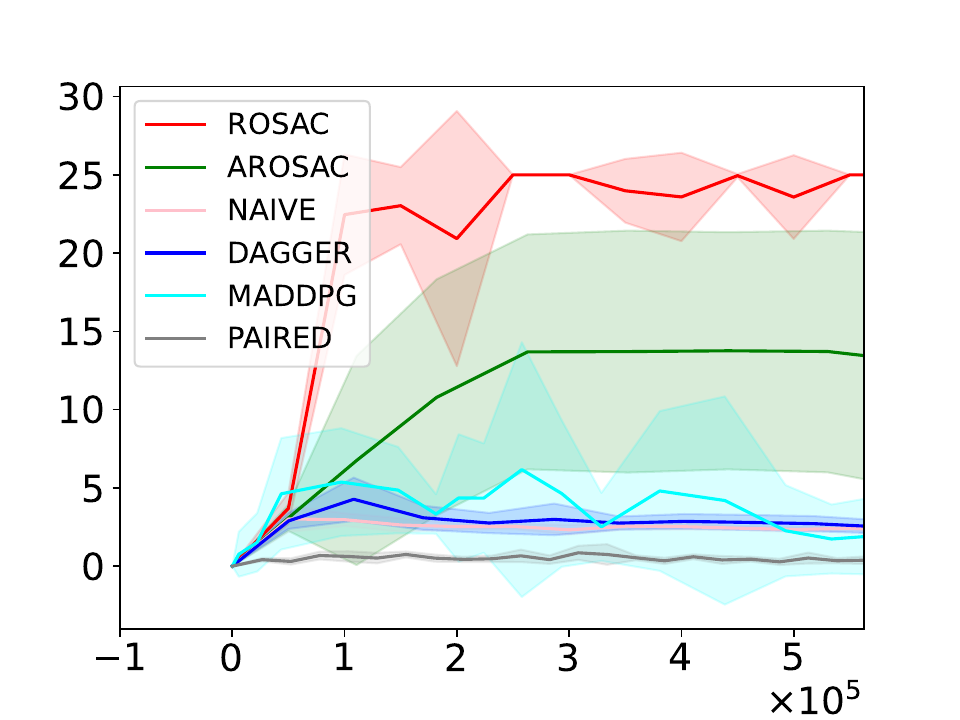}
% \caption{Number of subtaks completed\\against random adversary}
% \end{subfigure}
\begin{subfigure}{0.28\textwidth}
\includegraphics[width=\textwidth]{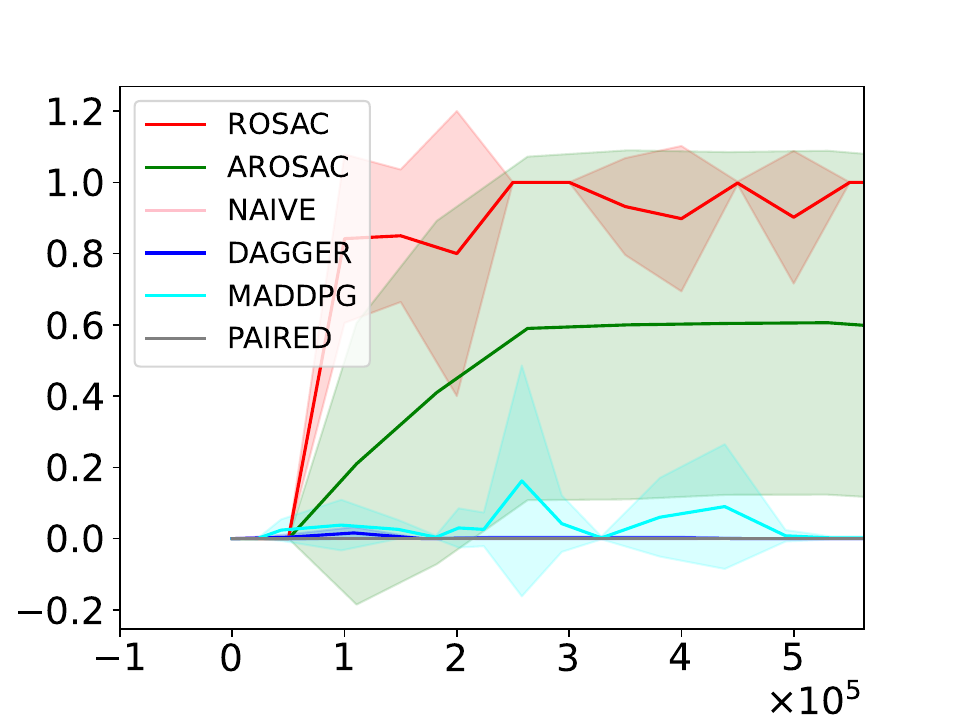}
\caption{Success probability against\\ random adversary}
\end{subfigure}\qquad\quad
\begin{subfigure}{0.28\textwidth}
\includegraphics[width=\textwidth]{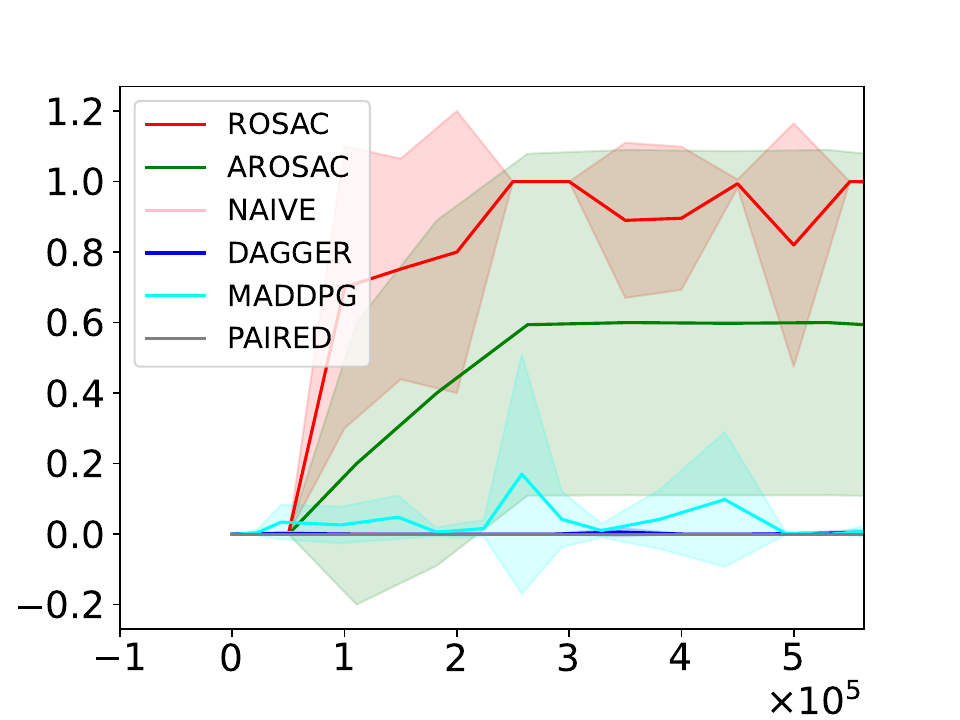}
\caption{Success probability against\\\textsc{Mcts} adversary}
\end{subfigure}\qquad\quad
\begin{subfigure}{0.28\textwidth}
\includegraphics[width=\textwidth]{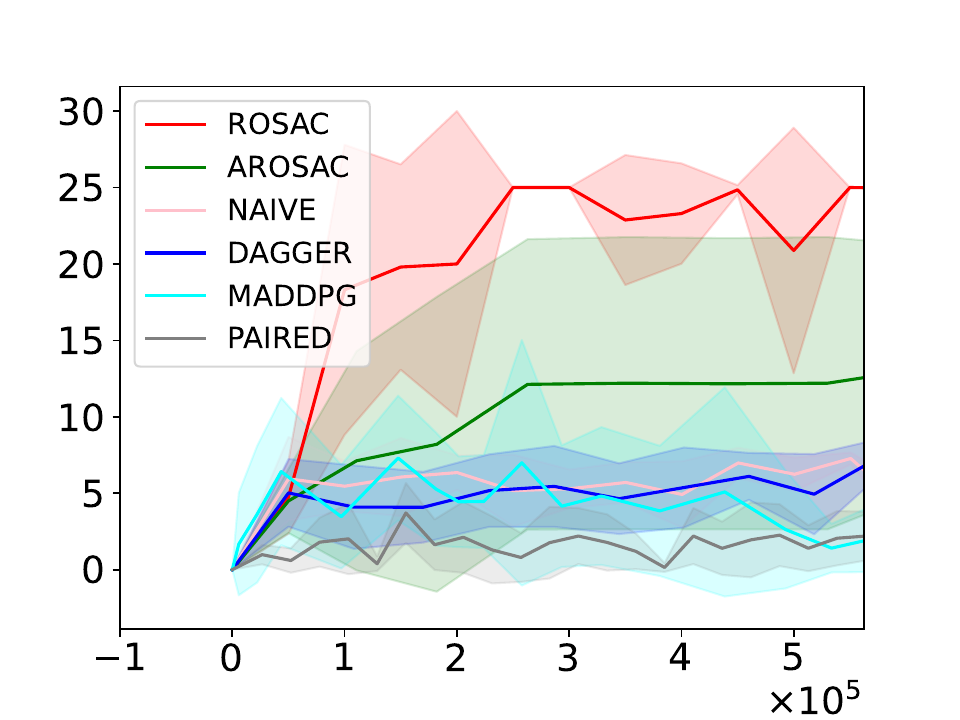}
\caption{Number of subtasks completed against \textsc{Mcts} adversary}
\end{subfigure}
\caption{Plots for the F1/10th environment. $x$-axis denotes the number of sample steps and $y$-axis denotes either the average number of subtasks completed or the probability of completing 25 subtasks. Results are averaged over 5 runs. {Error bars indicate $\pm$ std.}}
\label{fig:f110_plots}
\end{figure*}

\paragraph{F1/10th environment.} This is the environment in the motivating example. A publicly available simulator~\citep{f1tenth} of the F1/10th car is used for training and testing. The policies use the LiDAR measurements from the car as input (as opposed to the state) and we assume that the controller can detect the completion of each segment; as shown in prior work~\citep{ivanov2021compositional}, one can train a separate neural network to predict subtask completion.

\paragraph{Baselines.} We compare our approach to four baselines. The baseline \textsc{Naive} trains one controller for each subtask with the only aim of completing the subtask, similar to the approach used by~\citet{ivanov2021compositional}, using a manually designed initial state distribution.
\textsc{Dagger} is a similar approach which, instead of using a manually designed initial state distribution for training, infers the initial state distribution using the Dataset Aggregation principle~\citep{ross2011reduction}. In other words, \textsc{Dagger} is similar to \textsc{Arosac} except that $\M$ is used instead of $\M_{\sigma}^{\tilde{V}}$ (Line~\ref{line:vtilde} of Algorithm~\ref{alg:aosac}) for training the options in each iteration.
The \textsc{Maddpg} baseline solves the game $\G$ using the multi-agent RL algorithm proposed by~\citet{lowe2017multi} for solving concurrent Markov games with continuous states and actions. We use the open-source implementation of \textsc{Maddpg} by the authors.
{The \textsc{Paired} baseline refers to the unsupervised environment design approach proposed by~\citet{dennis2020emergent} in which the multi-task RL problem is viewed as a two-agent zero-sum game similar to our approach (but is not designed for long-horizon and compositional tasks). We implemented \textsc{Paired} with \textsc{Ppo} as the base algorithm to train the policies of the protagonist and the antagonist which consist of one NN policy per subtask. The adversary is parameterized by the probabilities (logits) of choosing various subtasks at different timesteps and is trained using \textsc{Reinforce} updates. }

\begin{figure*}[t]
\centering
% \begin{subfigure}{0.29\textwidth}
% \includegraphics[width=\textwidth]{rooms_avg_jumps}
% \caption{Number of subtaks completed\\against random adversary}
% \end{subfigure}
\begin{subfigure}{0.28\textwidth}
\includegraphics[width=\textwidth]{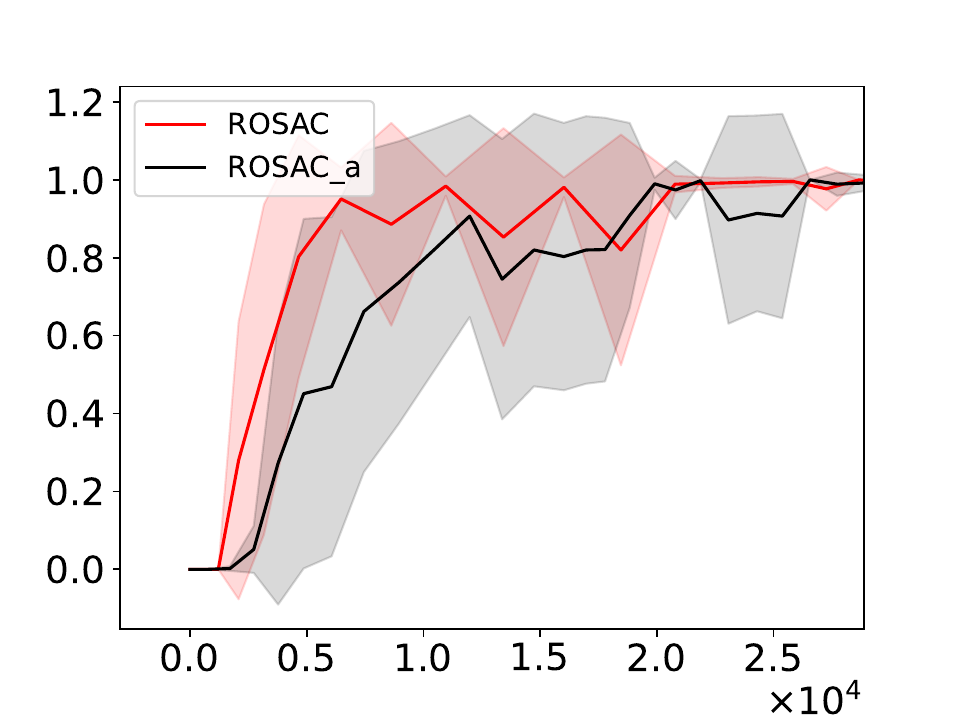}
\caption{Success probability against\\ random adversary}
\end{subfigure}\qquad\quad
\begin{subfigure}{0.28\textwidth}
\includegraphics[width=\textwidth]{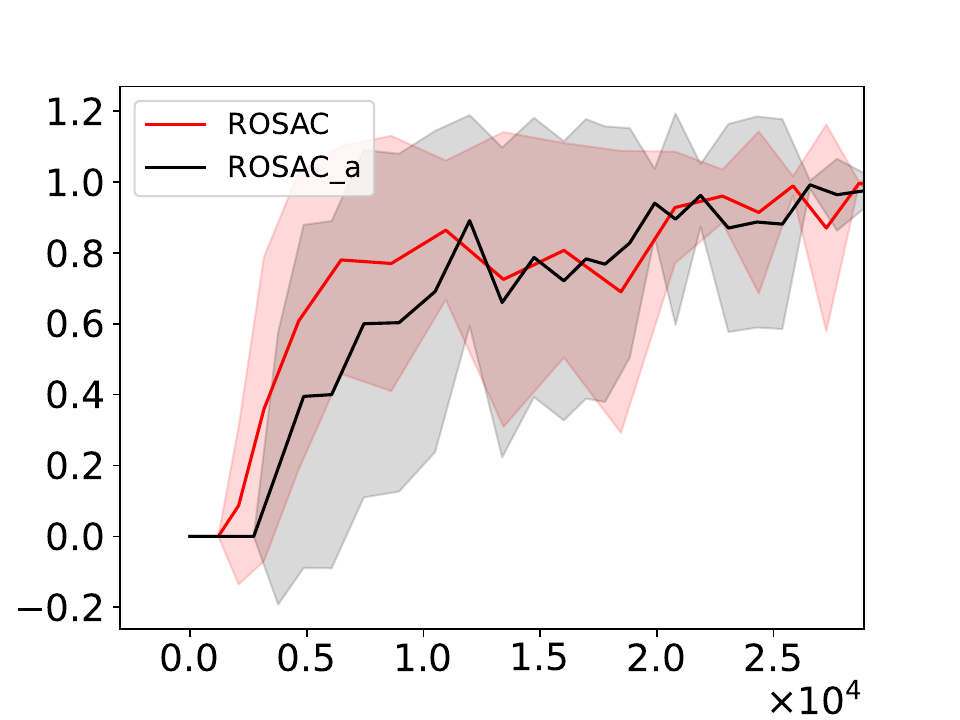}
\caption{Success probability against\\\textsc{Mcts} adversary}
\end{subfigure}\qquad\quad
\begin{subfigure}{0.28\textwidth}
\includegraphics[width=\textwidth]{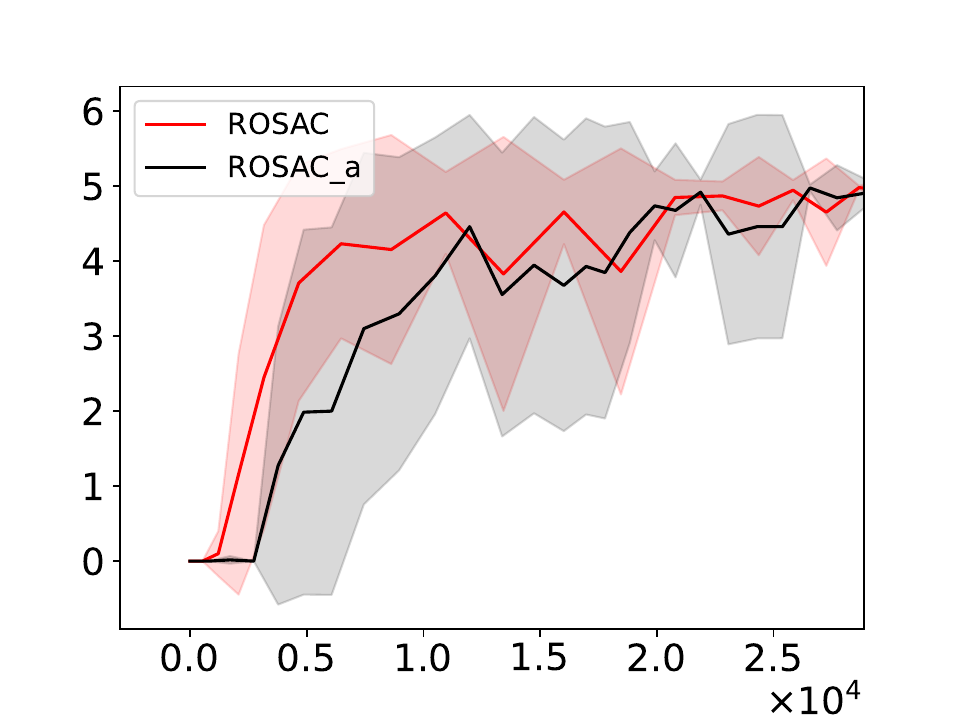}
\caption{Number of subtasks completed against \textsc{Mcts} adversary}
\end{subfigure}
\caption{Ablation study in the Rooms environment. $x$-axis denotes the number of sample steps and $y$-axis denotes either the average number of subtasks completed or the probability of completing 5 subtasks. Results are averaged over 10 runs. {Error bars indicate $\pm$ std.}}
\label{fig:rooms_ablation}
\end{figure*}

\paragraph{Evaluation.} We evaluate the performance of these algorithms against two adversaries. One adversary is the random adversary which picks the next subtask uniformly at random from the set of all subtasks. The other adversary estimates the worst sequence of subtasks for a given set of options using Monte Carlo Tree Search algorithm (\textsc{Mcts}) \citep{kocsis2006bandit}. The \textsc{Mcts} adversary is trained by assigning a reward of $1$ if it selects a subtask which the corresponding policy is unable to complete within a fixed time budget and a reward of $0$ otherwise. For the Rooms environment, we consider subtask sequences of length atmost 5 whereas for the F1/10th environment, we consider sequences of subtasks of length at most 25. We evaluate both the average number of subtasks completed as well as the probability of completing the set maximum number of subtasks.

\paragraph{Results.} The plots for the rooms environment are shown in Figure~\ref{fig:rooms_plots} and plots for the F1/10th environment are shown in Figure~\ref{fig:f110_plots}. We can observe that \textsc{Rosac} and \textsc{Arosac} outperform other approaches and learn robust options. \textsc{Arosac} performs worse as compared to \textsc{Rosac}; however, it has the added benefit of being parallelizable. \textsc{Dagger} and \textsc{Naive} baselines are unable to learn policies that can be used to perform long sequences of subtasks; this is mostly due to the fact that they learn options that cause the system to reach states from which future subtasks are difficult to perform---e.g., in the rooms environment, the agent sometimes reaches the left half of the exits from where it is difficult to reach the right exit in the subsequent room. Although \textsc{Maddpg} uses the same reduction to two-player games as \textsc{Rosac}, it ignores all the structure in the game and treats it as a generic Markov game. As a result, it learns a separate NN policy for each player which leads to the issue of unstable training, primarily due to the non-stationary nature of the environment observed by either agent. As shown in the plots, this leads to poor performance when applied to the problem of learning robust options. {Similarly, \textsc{Paired} is unable to learn good options and is likely due to two reasons.
Firstly, it does not use a compositional or a hierarchical approach for training the options, which causes it to scale poorly to long-horizon tasks. Secondly, it relies on ``on policy" algorithms (such as \textsc{Ppo}) for training which are less efficient than \textsc{Sac} for our environments.}

{\paragraph{Random vs MCTS adversary.} Given that the number of subtasks is small in our benchmarks, the random adversary is able to select difficult sequences of subtasks often (eg., a right after a left in the Rooms environment), which explains the similarity in the observed performance against the two different adversaries. Nonetheless, even when the performance is measured against a random adversary, our approach significantly outperforms all baselines. An ablation study in the Rooms environment shows that selecting subtasks randomly during training is not as effective as the $\varepsilon$-greedy approach. In Figure~\ref{fig:rooms_ablation}, \textsc{Rosac}$_a$ denotes the ablation in which $\varepsilon$ is set to $1$ in line~\ref{line:argmin} of Algorithm~\ref{alg:osac}. As the plots show, although this ablation is able to learn good subtask policies, it requires more samples than \textsc{Rosac}. Note that the ablation still uses the worst-case target value estimate over future subtasks $\tilde{V}$ for training the Q-networks.}

\section{Related Work}\label{sec:related}
\paragraph{Options and Hierarchical RL.} The options framework~\citep{sutton1999between} is commonly used to model subtask policies as temporally extended actions. In hierarchical RL~\citep{nachum2018data,nachum2019near,kulkarni2016hierarchical,dietterich2000state,bacon2017option,tiwari2019natural}, options are trained along with a high-level controller that chooses the sequence of options (plan) to execute in order to complete a specific high-level task. In contrast, our approach aims to train options that work for multiple high-level plans. There is also work on discovering options---e.g., using intrinsic motivation~\citep{machado2017laplacian}, entropy maximization~\citep{eysenbach2018diversity}, semi-supervised RL~\citep{finn2017generalizing}, skill chaining~\citep{konidaris2009skill, bagaria2019option, bagaria2021skill}, expectation maximization~\citep{daniel2016probabilistic} and subgoal identification~\citep{stolle2002learning}. {Our approach is complementary to option discovery methods since our algorithm can be used to learn robust policies corresponding to the options which can be used in multiple contexts.} There has also been a lot of research on planning using learned options~\citep{abel2020vpsa, pmlr-v130-jothimurugan21a,precup1998theoretical,theocharous2004approximate,konidaris2014constructing}. Some hierarchical RL algorithms have also been shown to enable few-shot generalization~\citep{pmlr-v130-jothimurugan21a} to unseen tasks. Closely related to our work is the work on compositional RL in the multi-task setting~\citep{ivanov2021compositional} in which the subtask policies are trained using standard RL algorithms in a naive way without guarantees regarding worst-case performance.

\paragraph{Multi-task RL.} There has been some work on RL for zero-shot generalization~\citep{pmlr-v139-vaezipoor21a,oh2017zero, sohn2018hierarchical,Kuo2020EncodingFA,andreas2017modular}; however, in these works, the learning objective is to maximize average performance with respect to a fixed distribution over tasks as opposed to the worst-case.
{Most closely related to our work is the line of work on minimax/robust RL \citep{pinto2017robust,campero2020learning, dennis2020emergent}, which aims to train policies to maximize the worst-case performance across multiple tasks/environments. However, there are a few key differences between existing work and our approach. Firstly, existing methods train a single NN policy to perform multiple tasks as opposed to training composable options for subtasks. As shown in the experiments, training a single policy does not scale well to complex long-horizon tasks. Secondly, these approaches do not provide strong convergence guarantees whereas we provide convergence guarantees in the finite-state setting. Finally, we utilize the structure in the problem to infer an adversary directly from the value functions instead of training a separate adversary as done in previous approaches.} 

\paragraph{Skill Chaining.} {Research on skill chaining \citep{bagaria2019option, bagaria2021robustly, bagaria2021skill, lee2021adversarial} has led to algorithms for discovering and training options so that they compose well. However, in these approaches, the aim is to compose the options to perform a \emph{specific} task which corresponds to executing the options in specific order(s). In contrast, our objective is independent of specific tasks and seeks to maximize performace across all tasks. Also, such approaches, at best, only provide hierarchical optimality guarantees whereas our algorithm converges to the optimal policy with respect to our game formulation.} There has also been work on skill composition using transition policies~\citep{lee2018composing}; this method assumes that the subtask policies are fixed and learns one transition policy per subtask which takes the system from an end state to a ``favourable" initial state for the subtask. However, poorly trained subtask policies can lead to situations in which it is not possible to achieve such transitions. In contrast, our approach trains subtask policies which compose well without requiring additional transition policies.

\paragraph{Multi-agent RL.} There has been a lot of research on multi-agent RL algorithms~\citep{lowe2017multi, hu2003nash, hu1998multiagent, littman2001friend, pmlr-v54-perolat17a, prasad2015two, akchurina2008multi} including algorithms for two-agent zero-sum games~\citep{pmlr-v119-bai20a, wei2017online, littman1994markov}. In this paper, we utilize the specific structure of our game to obtain a simple algorithm that neither requires solving matrix games nor trains a separate policy for the adversary. Furthermore, we show that we can obtain an asynchronous RL algorithm which enables training options in parallel.

\section{Conclusions}\label{sec:conclusions}
We have proposed a framework for training robust options which can be used to perform arbitrary sequences of subtasks. We first reduce the problem to a two-agent zero-sum stagewise Markov game which has a unique structure. We utilized this structure to design two algorithms, namely \textsc{Rosac} and \textsc{Arosac}, and demonstrated that they outperform existing approaches for training options with respect to multi-task performance. One potential limitation of our approach is that the set of subtasks is fixed and has to be provided by the user. An interesting direction for future work is to address this limitation by combining our approach with option discovery methods.

{\paragraph{Acknowledgements.} We thank the anonymous reviewers for their helpful comments. This research was partially supported by ONR award N00014-20-1-2115.}

%\paragraph{Societal impacts.} Our work seeks to improve reinforcement learning for complex long-horizon tasks. Any progress in this direction would enable robotics applications both with positive impact---e.g., flexible and general-purpose manufacturing robotics, robots for achieving agricultural tasks, and robots that can be used to perform household chores---and with negative or controversial impact---e.g., military applications. These issues are inherent in all work seeking to improve the abilities of robots.

% \begin{ack}
% \end{ack}

\bibliography{main}

\begin{thebibliography}{51}
\providecommand{\natexlab}[1]{#1}
\providecommand{\url}[1]{\texttt{#1}}
\expandafter\ifx\csname urlstyle\endcsname\relax
  \providecommand{\doi}[1]{doi: #1}\else
  \providecommand{\doi}{doi: \begingroup \urlstyle{rm}\Url}\fi

\bibitem[Abel et~al.(2020)Abel, Umbanhowar, Khetarpal, Arumugam, Precup, and
  Littman]{abel2020vpsa}
David Abel, Nathan Umbanhowar, Khimya Khetarpal, Dilip Arumugam, Doina Precup,
  and Michael~L. Littman.
\newblock Value preserving state-action abstractions.
\newblock In \emph{Proceedings of the International Conference on Artificial
  Intelligence and Statistics}, 2020.

\bibitem[Akchurina(2008)]{akchurina2008multi}
Natalia Akchurina.
\newblock Multi-agent reinforcement learning algorithm with variable
  optimistic-pessimistic criterion.
\newblock In \emph{ECAI}, volume 178, pages 433--437, 2008.

\bibitem[Akkaya et~al.(2019)Akkaya, Andrychowicz, Chociej, Litwin, McGrew,
  Petron, Paino, Plappert, Powell, Ribas, et~al.]{akkaya2019solving}
Ilge Akkaya, Marcin Andrychowicz, Maciek Chociej, Mateusz Litwin, Bob McGrew,
  Arthur Petron, Alex Paino, Matthias Plappert, Glenn Powell, Raphael Ribas,
  et~al.
\newblock Solving rubik's cube with a robot hand.
\newblock \emph{arXiv preprint arXiv:1910.07113}, 2019.

\bibitem[Andreas et~al.(2017)Andreas, Klein, and Levine]{andreas2017modular}
Jacob Andreas, Dan Klein, and Sergey Levine.
\newblock Modular multitask reinforcement learning with policy sketches.
\newblock In \emph{International Conference on Machine Learning}, pages
  166--175. PMLR, 2017.

\bibitem[Bacon et~al.(2017)Bacon, Harb, and Precup]{bacon2017option}
Pierre-Luc Bacon, Jean Harb, and Doina Precup.
\newblock The option-critic architecture.
\newblock In \emph{Thirty-First AAAI Conference on Artificial Intelligence},
  2017.

\bibitem[Bagaria and Konidaris(2019)]{bagaria2019option}
Akhil Bagaria and George Konidaris.
\newblock Option discovery using deep skill chaining.
\newblock In \emph{International Conference on Learning Representations}, 2019.

\bibitem[Bagaria et~al.(2021{\natexlab{a}})Bagaria, Senthil, and
  Konidaris]{bagaria2021skill}
Akhil Bagaria, Jason~K Senthil, and George Konidaris.
\newblock Skill discovery for exploration and planning using deep skill graphs.
\newblock In \emph{International Conference on Machine Learning}, pages
  521--531. PMLR, 2021{\natexlab{a}}.

\bibitem[Bagaria et~al.(2021{\natexlab{b}})Bagaria, Senthil, Slivinski, and
  Konidaris]{bagaria2021robustly}
Akhil Bagaria, Jason~K Senthil, Matthew Slivinski, and George Konidaris.
\newblock Robustly learning composable options in deep reinforcement learning.
\newblock In \emph{IJCAI}, pages 2161--2169, 2021{\natexlab{b}}.

\bibitem[Bai and Jin(2020)]{pmlr-v119-bai20a}
Yu~Bai and Chi Jin.
\newblock Provable self-play algorithms for competitive reinforcement learning.
\newblock In \emph{Proceedings of the 37th International Conference on Machine
  Learning}, 2020.

\bibitem[Bertsekas and Tsitsiklis(1996)]{BertsekasTsitsiklis96}
D.~P. Bertsekas and J.~N. Tsitsiklis.
\newblock \emph{Neuro-dynamic programming.}
\newblock Athena Scientific, Belmont, MA, 1996.

\bibitem[Campero et~al.(2020)Campero, Raileanu, K{\"u}ttler, Tenenbaum,
  Rockt{\"a}schel, and Grefenstette]{campero2020learning}
Andres Campero, Roberta Raileanu, Heinrich K{\"u}ttler, Joshua~B Tenenbaum, Tim
  Rockt{\"a}schel, and Edward Grefenstette.
\newblock Learning with amigo: Adversarially motivated intrinsic goals.
\newblock \emph{arXiv preprint arXiv:2006.12122}, 2020.

\bibitem[Daniel et~al.(2016)Daniel, Van~Hoof, Peters, and
  Neumann]{daniel2016probabilistic}
Christian Daniel, Herke Van~Hoof, Jan Peters, and Gerhard Neumann.
\newblock Probabilistic inference for determining options in reinforcement
  learning.
\newblock \emph{Machine Learning}, 104\penalty0 (2):\penalty0 337--357, 2016.

\bibitem[Dennis et~al.(2020)Dennis, Jaques, Vinitsky, Bayen, Russell, Critch,
  and Levine]{dennis2020emergent}
Michael Dennis, Natasha Jaques, Eugene Vinitsky, Alexandre Bayen, Stuart
  Russell, Andrew Critch, and Sergey Levine.
\newblock Emergent complexity and zero-shot transfer via unsupervised
  environment design.
\newblock \emph{Advances in neural information processing systems},
  33:\penalty0 13049--13061, 2020.

\bibitem[Dietterich(2000)]{dietterich2000state}
Thomas~G Dietterich.
\newblock State abstraction in maxq hierarchical reinforcement learning.
\newblock In \emph{Advances in Neural Information Processing Systems}, pages
  994--1000, 2000.

\bibitem[Eysenbach et~al.(2018)Eysenbach, Gupta, Ibarz, and
  Levine]{eysenbach2018diversity}
Benjamin Eysenbach, Abhishek Gupta, Julian Ibarz, and Sergey Levine.
\newblock Diversity is all you need: Learning skills without a reward function.
\newblock In \emph{ICLR}, 2018.

\bibitem[F110()]{f1tenth}
F110.
\newblock {F1/10 Autonomous Racing Competition}.
\newblock http://f1tenth.org.

\bibitem[Finn et~al.(2017)Finn, Yu, Fu, Abbeel, and
  Levine]{finn2017generalizing}
Chelsea Finn, Tianhe Yu, Justin Fu, Pieter Abbeel, and Sergey Levine.
\newblock Generalizing skills with semi-supervised reinforcement learning.
\newblock In \emph{ICLR}, 2017.

\bibitem[Fujimoto et~al.(2018)Fujimoto, Hoof, and
  Meger]{fujimoto2018addressing}
Scott Fujimoto, Herke Hoof, and David Meger.
\newblock Addressing function approximation error in actor-critic methods.
\newblock In \emph{International Conference on Machine Learning}, pages
  1587--1596, 2018.

\bibitem[Haarnoja et~al.(2018)Haarnoja, Zhou, Abbeel, and
  Levine]{haarnoja2018soft}
Tuomas Haarnoja, Aurick Zhou, Pieter Abbeel, and Sergey Levine.
\newblock Soft actor-critic: Off-policy maximum entropy deep reinforcement
  learning with a stochastic actor.
\newblock In \emph{International conference on machine learning}, pages
  1861--1870. PMLR, 2018.

\bibitem[Hu and Wellman(2003)]{hu2003nash}
Junling Hu and Michael~P Wellman.
\newblock Nash q-learning for general-sum stochastic games.
\newblock \emph{Journal of machine learning research}, 4\penalty0
  (Nov):\penalty0 1039--1069, 2003.

\bibitem[Hu et~al.(1998)Hu, Wellman, et~al.]{hu1998multiagent}
Junling Hu, Michael~P Wellman, et~al.
\newblock Multiagent reinforcement learning: theoretical framework and an
  algorithm.
\newblock In \emph{ICML}, volume~98, pages 242--250. Citeseer, 1998.

\bibitem[Ivanov et~al.(2021)Ivanov, Jothimurugan, Hsu, Vaidya, Alur, and
  Bastani]{ivanov2021compositional}
Radoslav Ivanov, Kishor Jothimurugan, Steve Hsu, Shaan Vaidya, Rajeev Alur, and
  Osbert Bastani.
\newblock Compositional learning and verification of neural network
  controllers.
\newblock \emph{ACM Transactions on Embedded Computing Systems (TECS)},
  20\penalty0 (5s):\penalty0 1--26, 2021.

\bibitem[Jothimurugan et~al.(2021)Jothimurugan, Bastani, and
  Alur]{pmlr-v130-jothimurugan21a}
Kishor Jothimurugan, Osbert Bastani, and Rajeev Alur.
\newblock Abstract value iteration for hierarchical reinforcement learning.
\newblock In Arindam Banerjee and Kenji Fukumizu, editors, \emph{Proceedings of
  The 24th International Conference on Artificial Intelligence and Statistics},
  volume 130 of \emph{Proceedings of Machine Learning Research}, pages
  1162--1170. PMLR, 13--15 Apr 2021.

\bibitem[Kocsis and Szepesv{\'a}ri(2006)]{kocsis2006bandit}
Levente Kocsis and Csaba Szepesv{\'a}ri.
\newblock Bandit based monte-carlo planning.
\newblock In \emph{European conference on machine learning}, pages 282--293.
  Springer, 2006.

\bibitem[Konidaris and Barto(2009)]{konidaris2009skill}
George Konidaris and Andrew~G Barto.
\newblock Skill discovery in continuous reinforcement learning domains using
  skill chaining.
\newblock In \emph{Advances in neural information processing systems}, pages
  1015--1023, 2009.

\bibitem[Konidaris et~al.(2014)Konidaris, Kaelbling, and
  Lozano-Perez]{konidaris2014constructing}
George Konidaris, Leslie Kaelbling, and Tomas Lozano-Perez.
\newblock Constructing symbolic representations for high-level planning.
\newblock In \emph{Twenty-Eighth AAAI Conference on Artificial Intelligence},
  2014.

\bibitem[Kulkarni et~al.(2016)Kulkarni, Narasimhan, Saeedi, and
  Tenenbaum]{kulkarni2016hierarchical}
Tejas~D Kulkarni, Karthik Narasimhan, Ardavan Saeedi, and Josh Tenenbaum.
\newblock Hierarchical deep reinforcement learning: Integrating temporal
  abstraction and intrinsic motivation.
\newblock In \emph{Advances in neural information processing systems}, pages
  3675--3683, 2016.

\bibitem[Kuo et~al.(2020)Kuo, Katz, and Barbu]{Kuo2020EncodingFA}
Yen-Ling Kuo, Boris Katz, and Andrei Barbu.
\newblock Encoding formulas as deep networks: Reinforcement learning for
  zero-shot execution of ltl formulas.
\newblock \emph{2020 IEEE/RSJ International Conference on Intelligent Robots
  and Systems (IROS)}, pages 5604--5610, 2020.

\bibitem[Lee et~al.(2018)Lee, Sun, Somasundaram, Hu, and Lim]{lee2018composing}
Youngwoon Lee, Shao-Hua Sun, Sriram Somasundaram, Edward~S Hu, and Joseph~J
  Lim.
\newblock Composing complex skills by learning transition policies.
\newblock In \emph{International Conference on Learning Representations}, 2018.

\bibitem[Lee et~al.(2021)Lee, Lim, Anandkumar, and Zhu]{lee2021adversarial}
Youngwoon Lee, Joseph~J Lim, Anima Anandkumar, and Yuke Zhu.
\newblock Adversarial skill chaining for long-horizon robot manipulation via
  terminal state regularization.
\newblock \emph{arXiv preprint arXiv:2111.07999}, 2021.

\bibitem[Lillicrap et~al.(2016)Lillicrap, Hunt, Pritzel, Heess, Erez, Tassa,
  Silver, and Wierstra]{lillicrap2016continuous}
Timothy~P Lillicrap, Jonathan~J Hunt, Alexander Pritzel, Nicolas Heess, Tom
  Erez, Yuval Tassa, David Silver, and Daan Wierstra.
\newblock Continuous control with deep reinforcement learning.
\newblock In \emph{ICLR}, 2016.

\bibitem[Littman(1994)]{littman1994markov}
Michael~L Littman.
\newblock Markov games as a framework for multi-agent reinforcement learning.
\newblock In \emph{Machine learning proceedings 1994}, pages 157--163.
  Elsevier, 1994.

\bibitem[Littman(2001)]{littman2001friend}
Michael~L Littman.
\newblock Friend-or-foe q-learning in general-sum games.
\newblock In \emph{ICML}, volume~1, pages 322--328, 2001.

\bibitem[Lowe et~al.(2017)Lowe, Wu, Tamar, Harb, Abbeel, and
  Mordatch]{lowe2017multi}
Ryan Lowe, Yi~Wu, Aviv Tamar, Jean Harb, Pieter Abbeel, and Igor Mordatch.
\newblock Multi-agent actor-critic for mixed cooperative-competitive
  environments.
\newblock In \emph{Proceedings of the 31st International Conference on Neural
  Information Processing Systems}, pages 6382--6393, 2017.

\bibitem[Machado et~al.(2017)Machado, Bellemare, and
  Bowling]{machado2017laplacian}
Marios~C Machado, Marc~G Bellemare, and Michael Bowling.
\newblock A laplacian framework for option discovery in reinforcement learning.
\newblock In \emph{Proceedings of the 34th International Conference on Machine
  Learning-Volume 70}, pages 2295--2304. JMLR. org, 2017.

\bibitem[Nachum et~al.(2018)Nachum, Gu, Lee, and Levine]{nachum2018data}
Ofir Nachum, Shixiang~Shane Gu, Honglak Lee, and Sergey Levine.
\newblock Data-efficient hierarchical reinforcement learning.
\newblock In \emph{Advances in Neural Information Processing Systems}, pages
  3303--3313, 2018.

\bibitem[Nachum et~al.(2019)Nachum, Gu, Lee, and Levine]{nachum2019near}
Ofir Nachum, Shixiang Gu, Honglak Lee, and Sergey Levine.
\newblock Near-optimal representation learning for hierarchical reinforcement
  learning.
\newblock In \emph{ICLR}, 2019.

\bibitem[Oh et~al.(2017)Oh, Singh, Lee, and Kohli]{oh2017zero}
Junhyuk Oh, Satinder Singh, Honglak Lee, and Pushmeet Kohli.
\newblock Zero-shot task generalization with multi-task deep reinforcement
  learning.
\newblock In \emph{International Conference on Machine Learning}, pages
  2661--2670. PMLR, 2017.

\bibitem[Patek(1997)]{patek1997stochastic}
Stephen~David Patek.
\newblock \emph{Stochastic and shortest path games: theory and algorithms}.
\newblock PhD thesis, Massachusetts Institute of Technology, 1997.

\bibitem[Perolat et~al.(2017)Perolat, Strub, Piot, and
  Pietquin]{pmlr-v54-perolat17a}
Julien Perolat, Florian Strub, Bilal Piot, and Olivier Pietquin.
\newblock {Learning Nash Equilibrium for General-Sum Markov Games from Batch
  Data}.
\newblock In \emph{Proceedings of the 20th International Conference on
  Artificial Intelligence and Statistics}, 2017.

\bibitem[Pinto et~al.(2017)Pinto, Davidson, Sukthankar, and
  Gupta]{pinto2017robust}
Lerrel Pinto, James Davidson, Rahul Sukthankar, and Abhinav Gupta.
\newblock Robust adversarial reinforcement learning.
\newblock In \emph{International Conference on Machine Learning}, pages
  2817--2826. PMLR, 2017.

\bibitem[Prasad et~al.(2015)Prasad, LA, and Bhatnagar]{prasad2015two}
HL~Prasad, Prashanth LA, and Shalabh Bhatnagar.
\newblock Two-timescale algorithms for learning nash equilibria in general-sum
  stochastic games.
\newblock In \emph{Proceedings of the 2015 International Conference on
  Autonomous Agents and Multiagent Systems}, pages 1371--1379, 2015.

\bibitem[Precup et~al.(1998)Precup, Sutton, and Singh]{precup1998theoretical}
Doina Precup, Richard~S Sutton, and Satinder Singh.
\newblock Theoretical results on reinforcement learning with temporally
  abstract options.
\newblock In \emph{European conference on machine learning}, pages 382--393.
  Springer, 1998.

\bibitem[Ross et~al.(2011)Ross, Gordon, and Bagnell]{ross2011reduction}
St{\'e}phane Ross, Geoffrey Gordon, and Drew Bagnell.
\newblock A reduction of imitation learning and structured prediction to
  no-regret online learning.
\newblock In \emph{Proceedings of the fourteenth international conference on
  artificial intelligence and statistics}, pages 627--635, 2011.

\bibitem[Sohn et~al.(2018)Sohn, Oh, and Lee]{sohn2018hierarchical}
Sungryull Sohn, Junhyuk Oh, and Honglak Lee.
\newblock Hierarchical reinforcement learning for zero-shot generalization with
  subtask dependencies.
\newblock \emph{Advances in Neural Information Processing Systems}, 31, 2018.

\bibitem[Stolle and Precup(2002)]{stolle2002learning}
Martin Stolle and Doina Precup.
\newblock Learning options in reinforcement learning.
\newblock In \emph{International Symposium on abstraction, reformulation, and
  approximation}, pages 212--223. Springer, 2002.

\bibitem[Sutton et~al.(1999)Sutton, Precup, and Singh]{sutton1999between}
Richard~S Sutton, Doina Precup, and Satinder Singh.
\newblock Between mdps and semi-mdps: A framework for temporal abstraction in
  reinforcement learning.
\newblock \emph{Artificial intelligence}, 112\penalty0 (1-2):\penalty0
  181--211, 1999.

\bibitem[Theocharous and Kaelbling(2004)]{theocharous2004approximate}
Georgios Theocharous and Leslie~P Kaelbling.
\newblock Approximate planning in pomdps with macro-actions.
\newblock In \emph{Advances in Neural Information Processing Systems}, pages
  775--782, 2004.

\bibitem[Tiwari and Thomas(2019)]{tiwari2019natural}
Saket Tiwari and Philip~S Thomas.
\newblock Natural option critic.
\newblock In \emph{Proceedings of the AAAI Conference on Artificial
  Intelligence}, volume~33, pages 5175--5182, 2019.

\bibitem[Vaezipoor et~al.(2021)Vaezipoor, Li, Icarte, and
  Mcilraith]{pmlr-v139-vaezipoor21a}
Pashootan Vaezipoor, Andrew~C Li, Rodrigo A~Toro Icarte, and Sheila~A.
  Mcilraith.
\newblock Ltl2action: Generalizing ltl instructions for multi-task rl.
\newblock In Marina Meila and Tong Zhang, editors, \emph{Proceedings of the
  38th International Conference on Machine Learning}, volume 139 of
  \emph{Proceedings of Machine Learning Research}, pages 10497--10508. PMLR,
  18--24 Jul 2021.

\bibitem[Wei et~al.(2017)Wei, Hong, and Lu]{wei2017online}
Chen-Yu Wei, Yi-Te Hong, and Chi-Jen Lu.
\newblock Online reinforcement learning in stochastic games.
\newblock In \emph{Proceedings of the 31st International Conference on Neural
  Information Processing Systems}, pages 4994--5004, 2017.

\end{thebibliography}
\bibliographystyle{plainnat}

\clearpage
\onecolumn
\appendix
%\pagenumbering{arabic}
\section{Proofs}\label{sec:proofs}
In this section, we prove the theorems presented in the paper through a series of lemmas. The proofs here are adaptations of proofs of similar results for zero-sum concurrent games with a single discount factor~\citep{patek1997stochastic}. 

\subsection{Definitions}
We first introduce some notation and definitions. Recall that we defined $S_1=\{(s,\sigma)\mid\sigma\notin F_\sigma\}$, $S_2=\{(s,\sigma)\mid\sigma\in F_\sigma\}$ and $\bar{S}=S_1\cup S_2$. $\V=\{V:S_1\to\R\}$ denotes the set of value functions over $S_1$ and $\bar{\V}=\{V:\bar{S}\to\R\}$ denotes the set of value functions over $\bar{S}$. We use $\norm{\cdot}$ to denote the $\ell_{\infty}$-norm. For any $V\in\V$, $(s,\sigma)\in S_2$ and any $\sigma'\in\Sigma$, define $$\ext{V}_{\sigma'}(s,\sigma) = \sum_{s'\in S}T_{\sigma}(s'\mid s)V(s',\sigma').$$
Note that for any $(s,\sigma)\in S_2$, we have $\ext{V}(s,\sigma) = \min_{\sigma'\in\Sigma}\ext{V}_{\sigma'}(s,\sigma)$. Similarly, for any $V\in\V$, $(s,\sigma)\in S_1$ and $a\in A$, let us define $$\F_a(V)(s,\sigma) = \bar{R}((s,\sigma), a) + \gamma\cdot\sum_{s'\in S}P(s'\mid s,a)\ext{V}(s',\sigma)$$
with $\F(V)(s,\sigma) = \max_{a\in A}\F_a(V)(s,\sigma)$. Given any policy $\pi_1:\bar{S}\to A_1$ for agent 1 in $\G$, we define the resulting MDP $\G(\pi_1) = (\bar{S}, A_2, P_{\pi_1}, R_{\pi_1}, \gamma)$ with states $\bar{S}$ and actions $\A_2=\Sigma$ as follows. The transition probability function is given by
$$
P_{\pi_1}((s',\sigma')\mid (s,\sigma), a_2) =
\begin{cases}
\bar{P}((s',\sigma')\mid (s,\sigma), \pi_1(s,\sigma), a_2) &\text{if}\ (s,\sigma)\in S_1\\
\sum_{s''\in S}T_{\sigma}(s''\mid s)\bar{P}((s',\sigma')\mid (s'',a_2), \pi_1(s'',a_2), a_2) &\text{if}\ (s,\sigma)\in S_2
\end{cases}
$$
and the reward function is given by
$$
R_{\pi_1}((s,\sigma), a_2) =
\begin{cases}
-\bar{R}((s,\sigma), \pi_1(s,\sigma)) &\text{if}\ (s,\sigma)\in S_1\\
-\sum_{s'\in S}T_{\sigma}(s'\mid s)\bar{R}((s',a_2), \pi_1(s',a_2)) &\text{if}\ (s,\sigma)\in S_2.
\end{cases}
$$
Intuitively, the MDP $\G(\pi_1)$ merges every step of $\G$ in which a change of subtask occurs with the subsequent step in the environment, while using $\pi_1$ to choose actions for agent 1. For any $\bar{s}\in\bar{S}$, let $\D^{\G(\pi_1)}_{\bar{s}}(\pi_2)$ denote the distribution over infinite trajectories generated by $\pi_2$ starting at $\bar{s}$ in $\G(\pi_1)$. Then we define the value function for the MDP $\G(\pi_1)$ using
$$V_{\G(\pi_1)}^{\pi_2}(\bar{s})=\E_{\rho\sim\D^{\G(\pi_1)}_{\bar{s}}(\pi_2)}\Big[\sum_{t=0}^\infty\gamma^t R_{\pi_1}(\bar{s}_t, a_t)\Big]$$
for all $\pi_2:\bar{S}\to A_2$ and $\bar{s}\in\bar{S}$.

\subsection{Necessary Lemmas}
We need a few intermediate results in order to prove the main theorems. We begin by analyzing the operators $\ext{\cdot}:\V\to\bar{\V}$ and $\F:\V\to\V$ defined in Section~\ref{sec:vi}.

\begin{lemma}\label{lem:ext}
For any $V_1,V_2\in\V$, we have $\norm{\ext{V_1}-\ext{V_2}} = \norm{V_1-V_2}$.
\end{lemma}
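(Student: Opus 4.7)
The plan is to prove both inequalities. For the upper bound, I would split into the two cases defining $\ext{V}$. For a state $(s,\sigma)\in S_1$, the definition gives $\ext{V_1}(s,\sigma)-\ext{V_2}(s,\sigma)=V_1(s,\sigma)-V_2(s,\sigma)$, so the pointwise difference is bounded by $\norm{V_1-V_2}$ immediately. For $(s,\sigma)\in S_2$, $\ext{V_i}(s,\sigma)=\min_{\sigma'}\ext{V_i}_{\sigma'}(s,\sigma)$, and I would use the standard fact that $\lvert\min_x f(x)-\min_x g(x)\rvert\leq \max_x\lvert f(x)-g(x)\rvert$ to reduce to bounding $\lvert\ext{V_1}_{\sigma'}(s,\sigma)-\ext{V_2}_{\sigma'}(s,\sigma)\rvert$ uniformly in $\sigma'$.

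For a fixed $\sigma'$, this difference equals $\bigl\lvert\sum_{s'\in S}T_\sigma(s'\mid s)\,(V_1(s',\sigma')-V_2(s',\sigma'))\bigr\rvert$. Here I would crucially invoke the paper's simplifying assumption that $T_\sigma(s'\mid s)=0$ whenever $s'\in F_{\sigma''}$ for some $\sigma''$; this guarantees that every $s'$ appearing in the sum satisfies $(s',\sigma')\in S_1$, so $V_i(s',\sigma')$ is well defined and $\lvert V_1(s',\sigma')-V_2(s',\sigma')\rvert\leq\norm{V_1-V_2}$. Combining with $\sum_{s'}T_\sigma(s'\mid s)\leq 1$ and taking a max over $\sigma'$ yields the same bound $\norm{V_1-V_2}$ on $S_2$, proving $\norm{\ext{V_1}-\ext{V_2}}\leq\norm{V_1-V_2}$.

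For the reverse inequality, I would observe that $\ext{V}$ agrees with $V$ on $S_1$, so $\sup_{(s,\sigma)\in\bar{S}}\lvert\ext{V_1}(s,\sigma)-\ext{V_2}(s,\sigma)\rvert\geq \sup_{(s,\sigma)\in S_1}\lvert V_1(s,\sigma)-V_2(s,\sigma)\rvert=\norm{V_1-V_2}$, giving equality. I do not expect any real obstacle here; the only subtle point is making sure the jump-transition assumption is invoked so that the inner sum only involves values of $V_i$ on $S_1$, where $V_i$ is actually defined. The rest is routine Lipschitz-of-min plus a convex combination bound.
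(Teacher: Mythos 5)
Your proof is correct and follows essentially the same route as the paper's: split on $S_1$ versus $S_2$, use the Lipschitz property of $\min$ together with the convex-combination bound $\sum_{s'}T_\sigma(s'\mid s)\le 1$ for the upper bound, and use the fact that $\ext{V}$ agrees with $V$ on $S_1$ for the reverse direction. Your explicit invocation of the assumption that $T_\sigma(s'\mid s)=0$ for $s'\in F_{\sigma''}$ (so that $V_i(s',\sigma')$ is defined on $S_1$) is a detail the paper leaves implicit, and is a welcome clarification.
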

\begin{proof}
For any $(s,\sigma)\in S_1$, we have $\nmod{\ext{V_1}(s,\sigma) - \ext{V_2}(s,\sigma)} = \nmod{V_1(s,\sigma) - V_2(s,\sigma)}$. For any $(s,\sigma)\in S_2$ and $\sigma'\in\Sigma$ we have
\begin{align*}
\nmod{\ext{V_1}_{\sigma'}(s, \sigma) - \ext{V_2}_{\sigma'}(s,\sigma)} &= \norm{\sum_{s'\in S}T_{\sigma}(s'\mid s)V_1(s',\sigma') - \sum_{s'\in S}T_{\sigma}(s'\mid s)V_2(s',\sigma')}\\
&\leq \sum_{s'\in S}T_{\sigma}(s'\mid s)\nmod{V_1(s',\sigma')-V_2(s',\sigma')}\\
&\leq \norm{V_1-V_2}.
\end{align*}
Now we have $\nmod{\ext{V_1}(s,\sigma) - \ext{V_2}(s,\sigma)} = \nmod{\min_{\sigma'}\ext{V_1}_{\sigma'}(s,\sigma) - \min_{\sigma'}\ext{V_2}_{\sigma'}(s,\sigma)}\leq \norm{V_1-V_2}$ which concludes the proof.
\end{proof}

Now we are ready to show that $\F$ is a contraction.

\begin{lemma}\label{lem:contr}
$\F:\V\to\V$ is a contraction mapping w.r.t the norm $\norm{\cdot}$.
\end{lemma}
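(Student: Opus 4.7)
The plan is to bound $|\F(V_1)(s,\sigma) - \F(V_2)(s,\sigma)|$ uniformly over $(s,\sigma) \in S_1$ by $\gamma \norm{V_1 - V_2}$, which directly gives the contraction with modulus $\gamma \in (0,1)$. The key enabling fact is already in hand: Lemma~\ref{lem:ext} tells us that the extension operator $\ext{\cdot}$ is nonexpansive in $\ell_\infty$. So the discount factor in the definition of $\F$ does all the shrinking.

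First I would fix $V_1, V_2 \in \V$ and $(s,\sigma) \in S_1$, and analyze $\F_a$ for an arbitrary action $a \in A$. Subtracting the two $\bar{R}$ terms cancels, leaving
\[
|\F_a(V_1)(s,\sigma) - \F_a(V_2)(s,\sigma)| = \gamma \left| \sum_{s' \in S} P(s' \mid s,a)\bigl(\ext{V_1}(s',\sigma) - \ext{V_2}(s',\sigma)\bigr) \right|.
\]
Applying the triangle inequality, using that $P(\cdot \mid s,a)$ is a probability distribution, and then invoking Lemma~\ref{lem:ext} gives
\[
|\F_a(V_1)(s,\sigma) - \F_a(V_2)(s,\sigma)| \leq \gamma \norm{\ext{V_1} - \ext{V_2}} = \gamma \norm{V_1 - V_2}.
\]

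Next I would lift this pointwise-in-$a$ bound to a bound on $\F = \max_a \F_a$ using the standard inequality $|\max_a f(a) - \max_a g(a)| \leq \max_a |f(a) - g(a)|$. This yields $|\F(V_1)(s,\sigma) - \F(V_2)(s,\sigma)| \leq \gamma \norm{V_1 - V_2}$ for every $(s,\sigma) \in S_1$, and taking the supremum over $S_1$ gives $\norm{\F(V_1) - \F(V_2)} \leq \gamma \norm{V_1 - V_2}$, proving the contraction.

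There is no real obstacle here; the lemma is essentially a routine Bellman-operator contraction argument once the nonexpansiveness of $\ext{\cdot}$ is known. The only place one has to be slightly careful is conceptual: although $\ext{V}$ involves a \emph{min} over $\sigma'$ (the adversary's choice) that could in principle complicate a contraction argument, Lemma~\ref{lem:ext} already absorbs that nonlinearity cleanly, so the present proof reduces to the standard single-agent discounted Bellman contraction.
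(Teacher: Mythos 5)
Your proposal is correct and follows essentially the same route as the paper's proof: bound $\nmod{\F_a(V_1)(s,\sigma)-\F_a(V_2)(s,\sigma)}$ by $\gamma\norm{V_1-V_2}$ using the triangle inequality, the fact that $P(\cdot\mid s,a)$ is a probability distribution, and the nonexpansiveness of $\ext{\cdot}$ from Lemma~\ref{lem:ext}, then pass to the max over actions. No gaps.
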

\begin{proof}
Let $V_1,V_2\in \V$. Then for any $(s,\sigma)\in S_1$ and $a\in A$, 
\begin{align*}
    \nmod{\F_a(V_1)(s,\sigma)-\F_a(V_2)(s,\sigma)} &= \nmod{\gamma\sum_{s'\in S}P(s'\mid s,a)\ext{V_1}(s',\sigma) - \gamma\sum_{s'\in S}P(s'\mid s,a)\ext{V_2}(s',\sigma)}\\
    &\leq \gamma\sum_{s'\in S}P(s'\mid s,a)\nmod{\ext{V_1}(s',\sigma) - \ext{V_2}(s',\sigma)}\\
    &\leq \gamma\norm{V_1-V_2}
\end{align*}
where the last inequality followed from Lemma~\ref{lem:ext}. Therefore,  for any $(s,\sigma)\in S_1$, we have $\nmod{\F(V_1)(s,\sigma)-\F(V_2)(s,\sigma)} = \nmod{\max_{a}\F_a(V_1)(s,\sigma)-\max_{a}\F_a(V_2)(s,\sigma)}\leq \gamma\norm{V_1-V_2}$ showing that $\F$ is a contraction.
\end{proof}

Now we connect the value function of the game $\G$ with that of the MDP $\G(\pi_1)$.

\begin{lemma}\label{lem:gpi}
For any $\pi_1:\bar{S}\to A_1$, $\pi_2:\bar{S}\to A_2$ and $\bar{s}\in\bar{S}$, $V^{\pi_1,\pi_2}(\bar{s}) = -V_{\G(\pi_1)}^{\pi_2}(\bar{s})$.
\end{lemma}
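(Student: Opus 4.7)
The plan is to show that both $V^{\pi_1,\pi_2}$ and $U := -V_{\G(\pi_1)}^{\pi_2}$ satisfy the same recursive system of equations on $\bar{S}$, then invoke uniqueness of the fixed point. This sidesteps any direct trajectory coupling, which is awkward because a step in $\G(\pi_1)$ corresponding to an $S_2$ state bundles two transitions of $\G$ together.

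First I would derive the one-step Bellman identity for $V^{\pi_1,\pi_2}$ in $\G$. Using $\bar{R}((s,\sigma),\cdot)=0$ and $\bar{\gamma}(s,\sigma)=1$ on $S_2$, together with $\bar{\gamma}(s,\sigma)=\gamma$ on $S_1$, the infinite sum splits into
\begin{align*}
V^{\pi_1,\pi_2}(s,\sigma) &= \bar{R}((s,\sigma),\pi_1(s,\sigma)) + \gamma\sum_{s'\in S} P(s'\mid s,\pi_1(s,\sigma))\,V^{\pi_1,\pi_2}(s',\sigma) &&\text{if } (s,\sigma)\in S_1,\\
V^{\pi_1,\pi_2}(s,\sigma) &= \sum_{s'\in S} T_\sigma(s'\mid s)\,V^{\pi_1,\pi_2}(s',\pi_2(s,\sigma)) &&\text{if } (s,\sigma)\in S_2.
\end{align*}

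Next I would verify that $U$ obeys exactly these equations. For $(s,\sigma)\in S_1$, this drops out of the standard Bellman equation for the MDP $\G(\pi_1)$ once I substitute the definitions of $R_{\pi_1}$ and $P_{\pi_1}$ on $S_1$ and multiply by $-1$. The $S_2$ case is the heart of the argument and the main obstacle: the Bellman equation in $\G(\pi_1)$ at $(s,\sigma)\in S_2$ produces an expression involving one compounded $T_\sigma$--$P$ transition and a discount $\gamma$, whereas the target equation has only a $T_\sigma$ step and no discount. To reconcile the two, I would use the standing assumption that $T_\sigma(s'\mid s)=0$ for $s'\in F_{\sigma'}$ (any $\sigma'$), which forces $(s',a_2)\in S_1$; then I can apply the already-verified $S_1$ Bellman identity for $V_{\G(\pi_1)}^{\pi_2}$ inside the sum $\sum_{s'}T_\sigma(s'\mid s)\,U(s',\pi_2(s,\sigma))$. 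The $\bar{R}$-term and the $\gamma$--$P$--$V$ term then recombine precisely into $-R_{\pi_1}((s,\sigma),\pi_2(s,\sigma)) - \gamma\sum_{\bar{s}'}P_{\pi_1}(\bar{s}'\mid(s,\sigma),\pi_2(s,\sigma))\,V_{\G(\pi_1)}^{\pi_2}(\bar{s}')$, which is $U(s,\sigma)$ by definition.

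Finally I would argue uniqueness. Define the linear operator $B:\bar{\V}\to\bar{\V}$ given by the right-hand sides above. Because every $S_2$ state transitions deterministically (in one step of $B$) into an $S_1$ state, two consecutive applications of $B$ always incur one factor of $\gamma$, so $B^2$ is a $\gamma$-contraction in $\ell_\infty$, giving $B$ a unique fixed point. Since both $V^{\pi_1,\pi_2}$ and $U$ are fixed points of $B$, they coincide, which is the claim. The only delicate piece is the $S_2$ manipulation described above; everything else is bookkeeping that mirrors the derivation of the operator $\F$ in Section~\ref{sec:vi}.
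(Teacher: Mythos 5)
Your proof is correct, but it takes a genuinely different route from the paper's. The paper argues directly at the level of trajectories: it maps each run $\bar{\rho}$ of $\G$ to a subsequence $\bar{\rho}_2$ that skips the intermediate step at each $S_2$ state, uses $\bar{R}=0$ and $\bar{\gamma}=1$ on $S_2$ to re-index the discounted sum, and then identifies the pushforward of $\D^{\G}_{\bar{s}}(\pi_1,\pi_2)$ under this map with $\D^{\G(\pi_1)}_{\bar{s}}(\pi_2)$ (a measure-theoretic identification it relegates to a footnote about cylinder sets). You instead show that $V^{\pi_1,\pi_2}$ and $-V^{\pi_2}_{\G(\pi_1)}$ satisfy the same Bellman system and appeal to uniqueness of the fixed point; your key $S_2$ step --- using the assumption that $T_\sigma$ never lands in a final state to force $(s',a_2)\in S_1$ and then unfolding the already-verified $S_1$ identity inside the sum --- is exactly the right way to undo the two-step bundling in $\G(\pi_1)$, and your observation that $B^2$ is a $\gamma$-contraction (since $S_2$ states feed only into $S_1$ states) correctly delivers uniqueness. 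What each approach buys: the paper's coupling is shorter and makes the ``merging of steps'' intuition explicit, but sweeps the measure identification under the rug; yours avoids any reasoning about distributions over infinite runs at the cost of setting up the operator $B$ and checking its contraction property. The only point worth making explicit in your write-up is that the one-step Bellman identity for $V^{\pi_1,\pi_2}$ with the state-dependent discount presupposes convergence of the defining series, which again rests on the same standing assumption (no two consecutive undiscounted steps); this is minor and implicitly needed by the paper's proof as well.
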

\begin{proof}
Given an infinite trajectory $\bar{\rho} = \bar{s}_0a_0^1a_0^2\bar{s}_1a_1^1a_1^2\ldots$ in $\G$ we define a corresponding trajectory $\bar{\rho}_2=\bar{s}_{i_0}a_{i_0}^2\bar{s}_{i_1}a_{i_1}^2$ in $\G(\pi_1)$ as a subsequence where $i_0 = 0$ and $i_{t+1} = i_t+1$ if $\bar{s}_{i_t}\in S_1$ and $i_{t+1}=i_t+2$ if $\bar{s}_{i_t}\in S_2$. Then for any $\bar{s}\in \bar{S}$ we have
\begin{align*}
    V^{\pi_1, \pi_2}(\bar{s}) &= \E_{\bar{\rho}\sim\D^{\G}_{\bar{s}}({\pi}_1,{\pi}_2)}\Big[\sum_{t=0}^{\infty}\big(\prod_{k=0}^{t-1}\bar{\gamma}(\bar{s}_k)\big)\bar{R}(\bar{s}_t,a_t^1)\Big]\\
    &\stackrel{(1)}{=} \E_{\bar{\rho}\sim\D^{\G}_{\bar{s}}({\pi}_1,{\pi}_2)}\Big[\sum_{t=0}^{\infty}\gamma^t{R}_{\pi_1}(\bar{s}_{i_t},a_{i_t}^2)\Big]\\
    &\stackrel{(2)}{=} -\E_{{\rho}\sim\D^{\G(\pi_1)}_{\bar{s}}({\pi}_2)}\Big[\sum_{t=0}^{\infty}\gamma^t{R}_{\pi_1}(\bar{s}_{t},a_{t})\Big]\\
    &= -V_{\G(\pi_1)}^{\pi_2}(\bar{s})
\end{align*}
where (1) followed from the definitions of $\bar{\gamma}$ and $R_{\pi_1}$ and the fact that $\bar{R}(\bar{s}_t, a_t^1)=0$ if $\bar{s}_t\in S_2$, and (2) followed from the fact that sampling a trajectory ${\rho}$ by first sampling $\bar{\rho}$ from $\D^{\G}_{\bar{s}}({\pi}_1,{\pi}_2)$ and then constructing the subsequence $\bar{\rho}_2$ is the same as sampling an infinite trajectory ${\rho}$ from $\D^{\G(\pi_1)}_{\bar{s}}({\pi}_2)$\footnote{This can be shown formally by analyzing the probabilities assigned by the two distributions on cylinder sets.}.
\end{proof}

Lemma~\ref{lem:contr} shows that for any $V\in\V$ we have $$\lim_{n\to\infty}\F^n(V) = V_{\lim}$$ where $V_{\lim}\in\V$ is the unique fixed point of $\F$. Now we define two policies $\pi_1^*$ and $\pi_2^*$ for agents 1 and 2 respectively, as follows. For $(s,\sigma)\in S_1$ we have \begin{equation}\label{eq:pi1_def}\pi_1^*(s,\sigma) \in \operatorname{\arg\max}_{a\in A}\F_a(V_{\lim})(s,\sigma)\end{equation} and for $(s,\sigma)\in S_2$, we have \begin{equation}\label{eq:pi2_def}\pi^*_2(s,\sigma)\in\operatorname{\arg\min}_{\sigma'}\ext{V_{\lim}}_{\sigma'}(s,\sigma).\end{equation}
Note that the actions taken by $\pi_1^*$ in $S_2$ and $\pi_2^*$ in $S_1$ can be arbitrary since they do not affect the transitions of the game $\G$. Now we show that for any $\bar{s}\in\bar{S}$, $\pi^*_1$ maximizes $V^{\pi_1, \pi_2^*}(\bar{s})$ and $\pi^*_2$ minimizes $V^{{\pi}^*_1, {\pi}_2}(\bar{s})$.

\begin{lemma}\label{lem:pi2}
For any $\bar{s}\in \bar{S}$, $V^{\pi^*_1,\pi^*_2}(\bar{s}) = \min_{{\pi}_2}V^{\pi^*_1,{\pi}_2}(\bar{s}) = \ext{V_{\lim}}(\bar{s})$. 
\end{lemma}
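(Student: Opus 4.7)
The plan is to translate the claim into a statement about the induced MDP $\G(\pi_1^*)$ via Lemma~\ref{lem:gpi}, which gives $V^{\pi_1^*,\pi_2}(\bar{s}) = -V^{\pi_2}_{\G(\pi_1^*)}(\bar{s})$ for every $\pi_2$ and $\bar{s}$. Consequently, $\min_{\pi_2}V^{\pi_1^*,\pi_2}(\bar{s}) = -\max_{\pi_2}V^{\pi_2}_{\G(\pi_1^*)}(\bar{s}) = -V^*_{\G(\pi_1^*)}(\bar{s})$, and it suffices to show that (i) $V^*_{\G(\pi_1^*)}(\bar{s}) = -\ext{V_{\lim}}(\bar{s})$ on all of $\bar{S}$, and (ii) $\pi_2^*$ attains this optimum, so that $V^{\pi_2^*}_{\G(\pi_1^*)} = V^*_{\G(\pi_1^*)}$. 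I will establish both by proposing the candidate $U\in\bar{\V}$ defined by $U(\bar{s}) = -\ext{V_{\lim}}(\bar{s})$ and verifying that $U$ satisfies the Bellman optimality equation for $\G(\pi_1^*)$, with $\pi_2^*(\bar{s})$ attaining the $\max_{a_2}$ on the right-hand side at every $\bar{s}$. Since the Bellman optimality operator and the policy evaluation operator under $\pi_2^*$ for $\G(\pi_1^*)$ are both $\gamma$-contractions on $\bar{\V}$ in $\norm{\cdot}$, each has a unique fixed point; the verification therefore forces $V^*_{\G(\pi_1^*)} = V^{\pi_2^*}_{\G(\pi_1^*)} = U$, and composing with Lemma~\ref{lem:gpi} delivers both claimed equalities.

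The verification splits on whether $\bar{s} = (s,\sigma)$ lies in $S_1$ or $S_2$. For $(s,\sigma)\in S_1$, both $R_{\pi_1^*}$ and $P_{\pi_1^*}$ at $(s,\sigma)$ are independent of $a_2$, so the Bellman right-hand side becomes $-\F_{\pi_1^*(s,\sigma)}(V_{\lim})(s,\sigma)$; since $\pi_1^*(s,\sigma)\in\arg\max_a \F_a(V_{\lim})(s,\sigma)$ and $\F(V_{\lim}) = V_{\lim}$, this simplifies to $-V_{\lim}(s,\sigma) = U(s,\sigma)$, and any $a_2$ (in particular $\pi_2^*(s,\sigma)$) trivially attains the max. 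For $(s,\sigma)\in S_2$, a single step of $\G(\pi_1^*)$ fuses a jump via $T_\sigma$ with the subsequent primitive step governed by $\pi_1^*$; here I will use the standing assumption that $T_\sigma$ places no mass on any final region $F_{\sigma'}$, so the intermediate state $s'$ always lies in the $S_1$-part of $\bar{S}$ and I can invoke $\F_{\pi_1^*(s',a_2)}(V_{\lim})(s',a_2) = V_{\lim}(s',a_2)$ pointwise in $s'$. Commuting the sum over $s'$ with $\max_{a_2}$ then collapses the right-hand side to $-\min_{a_2}\sum_{s'}T_\sigma(s'\mid s)V_{\lim}(s',a_2) = -\ext{V_{\lim}}(s,\sigma) = U(s,\sigma)$, with the minimum attained at $\pi_2^*(s,\sigma)$ by Equation~\ref{eq:pi2_def}.

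The main obstacle is the $S_2$ case, where a single MDP step in $\G(\pi_1^*)$ aggregates a jump transition with a subsequent $\pi_1^*$-action; the algebra requires carefully commuting the sum over the intermediate $s'$ with $\max_{a_2}$ before invoking the $\F$-fixed-point identity pointwise, and the disjointness assumption on $T_\sigma$ is what keeps the argument from cascading through further jumps. The $S_1$ case amounts to bookkeeping on top of the definitions of $\F$ and $\pi_1^*$.
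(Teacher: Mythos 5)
Your proposal is correct and follows essentially the same route as the paper's proof: both verify that $-\ext{V_{\lim}}$ satisfies the Bellman optimality equations of the induced MDP $\G(\pi_1^*)$ (using the fixed-point identity $\F(V_{\lim})=V_{\lim}$, the definition of $\pi_1^*$, and the assumption that $T_\sigma$ avoids final regions), identify $\pi_2^*$ as a greedy optimal policy for that MDP, and translate back via Lemma~\ref{lem:gpi}. Your treatment is slightly more explicit about the $S_1$ case and the uniqueness-via-contraction step, which the paper leaves implicit; the only quibble is that no genuine interchange of $\sum_{s'}$ with $\max_{a_2}$ is needed, since the max already sits outside the sum in the Bellman equation and the fixed-point identity is applied pointwise inside it.
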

\begin{proof}
Let $\G(\pi_1^*) = (\bar{S},\A_2,P_{\pi_1^*},R_{\pi_1^*},\gamma)$. For any $(s,\sigma)\in S_2$, we have
\begin{align*}
    \ext{V_{\lim}}(s,\sigma) &= \min_{\sigma'\in\Sigma}\sum_{s'\in S}T_{\sigma}(s'\mid s)\ext{V_{\lim}}(s',\sigma')\\
    &\stackrel{(3)}{=} \min_{\sigma'\in\Sigma}\sum_{s'\in S}T_{\sigma}(s'\mid s)\Big(\bar{R}((s',\sigma'),a) + \gamma\cdot\sum_{s''\in S}P(s'\mid s,a)\ext{V_{\lim}}(s'',\sigma')\Big)\Big|_{a=\pi^*_1(s',\sigma')}\\
    &\stackrel{(4)}{=} \min_{a_2\in A_2}\Big(-R_{\pi_1^*}((s,\sigma),a_2) + \gamma\sum_{\bar{s}\in\bar{S}}P_{\pi_1^*}(\bar{s}\mid (s,\sigma),a_2)\ext{V_{\lim}}(\bar{s})\Big)
\end{align*}
where (3) followed from the definitions of $V_{\lim}$ and $\pi_1^*$ and (4) followed from the definitions of $R_{\pi_1^*}$ and $P_{\pi_1^*}$. Since $-\ext{V_{\lim}}$ satisfies the Bellman equations for the MDP $\G(\pi_1^*)$, the optimal value function for $\G(\pi_1^*)$ is given by $V^*_{\G(\pi_1^*)} = -\ext{V_{\lim}}$. Now, from the definition of $\pi_2^*$ we can conclude that $\pi_2^*$ is an optimal policy for $\G(\pi_1^*)$. Therefore, Lemma~\ref{lem:gpi} implies that for any $\bar{s}\in\bar{S}$, $$\min_{\pi_2}V^{\pi_1^*, \pi_2}(\bar{s}) = \min_{\pi_2}-V_{\G(\pi_1^*)}^{\pi_2}(\bar{s}) = -V_{\G(\pi_1^*)}^{\pi_2^*}(\bar{s}) = V^{\pi_1^*, \pi_2^*}(\bar{s}).$$
Hence, we have proved the desired result.
\end{proof}

The following lemma can be shown using a similar argument and the proof is omitted.
\begin{lemma}\label{lem:pi1}
For any $\bar{s}\in \bar{S}$, $V^{\pi^*_1,\pi^*_2}(\bar{s}) = \max_{{\pi}_1}V^{\pi_1,{\pi}_2^*}(\bar{s}) = \ext{V_{\lim}}(\bar{s})$. 
\end{lemma}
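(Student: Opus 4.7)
The plan is to mirror the argument used for Lemma~\ref{lem:pi2} by swapping the roles of the two players. I would introduce an analogue of the MDP $\G(\pi_1)$ --- call it $\G(\pi_2^*) = (\bar{S}, A_1, P_{\pi_2^*}, R_{\pi_2^*}, \gamma)$ --- in which agent $2$'s policy is fixed to $\pi_2^*$ and agent $1$ becomes the (maximizing) decision maker. The transition function $P_{\pi_2^*}$ merges each $S_2$-step, which is fully determined by $\pi_2^*$, with the subsequent $S_1$-step: when $(s,\sigma)\in S_2$ the jump $s\mapsto s''\sim T_\sigma(\cdot\mid s)$ and the subtask choice $\sigma'=\pi_2^*(s,\sigma)$ are performed, followed immediately by agent $1$'s action at $(s'',\sigma')\in S_1$ (the latter is in $S_1$ because the paper's assumption on $T_\sigma$ forbids landing in a final state). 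The reward $R_{\pi_2^*}$ is defined analogously but without negation, since agent $1$ is maximizing.

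Next I would prove the symmetric analogue of Lemma~\ref{lem:gpi}: for every $\pi_1$ and $\bar{s}\in\bar{S}$, $V^{\pi_1,\pi_2^*}(\bar{s}) = V_{\G(\pi_2^*)}^{\pi_1}(\bar{s})$. This follows by the same trajectory-subsequencing argument, together with the fact that $\bar{\gamma}(\bar{s}_k)=1$ and $\bar{R}(\bar{s}_t,a_t^1)=0$ on $S_2$-steps, so each two-step chunk of a $\G$-trajectory collapses cleanly into a single discounted term of the $\G(\pi_2^*)$ return. I would then verify that $\ext{V_{\lim}}$ satisfies the Bellman optimality equations of $\G(\pi_2^*)$. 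For $(s,\sigma)\in S_1$ the fixed-point identity $V_{\lim}=\F(V_{\lim})$ directly gives
\[
\ext{V_{\lim}}(s,\sigma) \;=\; \max_{a\in A}\Big\{\bar{R}((s,\sigma),a) + \gamma\sum_{s'\in S}P(s'\mid s,a)\,\ext{V_{\lim}}(s',\sigma)\Big\},
\]
which is precisely the Bellman optimality equation at that state. For $(s,\sigma)\in S_2$, the definition of $\pi_2^*$ in (\ref{eq:pi2_def}) gives $\ext{V_{\lim}}(s,\sigma) = \sum_{s'} T_\sigma(s'\mid s)\,V_{\lim}(s',\pi_2^*(s,\sigma))$; expanding $V_{\lim}(s',\pi_2^*(s,\sigma))$ via $\F$ and re-identifying the result using the definitions of $P_{\pi_2^*}$ and $R_{\pi_2^*}$ yields the corresponding optimality identity.

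Standard MDP theory then gives $V^*_{\G(\pi_2^*)} = \ext{V_{\lim}}$, and the greedy policy with respect to this value is optimal; by construction (\ref{eq:pi1_def}), that greedy policy is exactly $\pi_1^*$. Combining with the trajectory identity yields $\max_{\pi_1} V^{\pi_1,\pi_2^*}(\bar{s}) = V_{\G(\pi_2^*)}^{\pi_1^*}(\bar{s}) = \ext{V_{\lim}}(\bar{s})$, and together with Lemma~\ref{lem:pi2} this equals $V^{\pi_1^*,\pi_2^*}(\bar{s})$, establishing the lemma. The main obstacle is purely bookkeeping: getting the collapsed transition/reward pair at $S_2$ states to match the Bellman optimality form exactly, which is where the choice $\bar{\gamma}\equiv 1$ on $S_2$ and the no-consecutive-$S_2$ property are both essential. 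Once that is handled, everything else follows by symmetry with the proof of Lemma~\ref{lem:pi2}.
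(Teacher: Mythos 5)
The paper omits this proof, stating only that it follows by an argument symmetric to Lemma~\ref{lem:pi2}, and your overall plan is indeed the intended one. However, the way you mirror the construction of $\G(\pi_1)$ introduces a genuine problem. In the paper's $\G(\pi_1)$, the step folded into the transition out of an $S_2$ state is the \emph{fixed} opponent's action $\pi_1(s'',a_2)$, which may depend on the realized post-jump state $s''$ precisely because $\pi_1$ is a fixed function; the decision maker (agent 2) still chooses its action at $(s,\sigma)\in S_2$ before the jump, exactly as it does in $\G$. When you mirror this and fold ``agent 1's action at $(s'',\sigma')$'' into the transition out of $(s,\sigma)\in S_2$, the folded-in action now belongs to the \emph{optimizing} agent, so in your MDP $\G(\pi_2^*)$ agent 1 must commit to a single action at $(s,\sigma)$ before observing $s''\sim T_\sigma(\cdot\mid s)$. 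This changes the information structure relative to $\G$, where agent 1 acts at $(s'',\sigma')\in S_1$ after seeing $s''$. Two steps of your argument then fail: (i) the claimed identity $V^{\pi_1,\pi_2^*}(\bar{s}) = V^{\pi_1}_{\G(\pi_2^*)}(\bar{s})$ breaks at $\bar{s}\in S_2$, since a game policy prescribes $\pi_1(s'',\sigma')$ (a function of $s''$) while your MDP policy must prescribe one action at $(s,\sigma)$; and (ii) the Bellman optimality equation of your merged MDP at $(s,\sigma)\in S_2$, with $\sigma'=\pi_2^*(s,\sigma)$, reads $\max_{a}\sum_{s'}T_\sigma(s'\mid s)\bigl[\bar{R}((s',\sigma'),a)+\gamma\sum_{s''}P(s''\mid s',a)\ext{V_{\lim}}(s'',\sigma')\bigr]$, whereas expanding $\ext{V_{\lim}}(s,\sigma)$ via $\F$ gives $\sum_{s'}T_\sigma(s'\mid s)\max_{a}[\cdots]$; a maximum of averages and an average of maxima can differ whenever $T_\sigma(\cdot\mid s)$ is not a point mass, so $\ext{V_{\lim}}$ need not be a fixed point of your operator. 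This is exactly the ``bookkeeping'' you flag as the main obstacle, and it is not merely bookkeeping.

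The repair is simple and leaves everything else you wrote intact: do not merge forward. Either keep $(s,\sigma)\in S_2$ as an action-independent node with reward $0$ and discount $1$ that moves to $(s'',\pi_2^*(s,\sigma))$ with probability $T_\sigma(s''\mid s)$, so that agent 1 only ever decides at $S_1$ states after the jump, or fold the $S_2$ step backwards into the preceding $S_1$ transition. With either choice, your verification at $S_1$ states via $V_{\lim}=\F(V_{\lim})$ is correct, $\pi_1^*$ is greedy for the resulting MDP by Equation~\ref{eq:pi1_def}, and the trajectory-correspondence and standard MDP optimality arguments conclude the proof as you describe.
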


The following lemma shows that it does not matter which agent picks its policy first.
\begin{lemma}\label{lem:best_policy}
For any policies ${\pi}^*_1$ and ${\pi}^*_2$ satisfying Equations~\ref{eq:pi1_def} and~\ref{eq:pi2_def} respectively, for all $\bar{s}\in\bar{S}$,
$$V^{{\pi}^*_1, {\pi}^*_2}(\bar{s}) = \min_{\pi_2}\max_{\pi_1}V^{\pi_1,\pi_2}(\bar{s})=\max_{\pi_1}\min_{\pi_2}V^{\pi_1,\pi_2}(\bar{s}) = V^*(\bar{s}).$$
\end{lemma}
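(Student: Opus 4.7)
The plan is to combine Lemmas~\ref{lem:pi2} and~\ref{lem:pi1} with the standard minimax inequality to pin all four quantities to the same value. Recall that $V^*(\bar{s}) = \max_{\pi_1}\min_{\pi_2}V^{\pi_1,\pi_2}(\bar{s})$ by definition, so it suffices to show that the inner quantity $V^{\pi_1^*,\pi_2^*}(\bar{s})$ simultaneously upper-bounds $\min_{\pi_2}\max_{\pi_1}V^{\pi_1,\pi_2}(\bar{s})$ and lower-bounds $\max_{\pi_1}\min_{\pi_2}V^{\pi_1,\pi_2}(\bar{s})$, and then invoke weak duality to force equality.

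First, I will observe from Lemma~\ref{lem:pi2} that $V^{\pi_1^*,\pi_2^*}(\bar{s})=\min_{\pi_2}V^{\pi_1^*,\pi_2}(\bar{s})$. Since the right-hand side is $\min_{\pi_2}V^{\pi_1^*,\pi_2}(\bar{s})\leq\max_{\pi_1}\min_{\pi_2}V^{\pi_1,\pi_2}(\bar{s})$ by taking the specific choice $\pi_1=\pi_1^*$ in the outer max, this yields the inequality
$$V^{\pi_1^*,\pi_2^*}(\bar{s})\leq\max_{\pi_1}\min_{\pi_2}V^{\pi_1,\pi_2}(\bar{s})=V^*(\bar{s}).$$
Symmetrically, Lemma~\ref{lem:pi1} gives $V^{\pi_1^*,\pi_2^*}(\bar{s})=\max_{\pi_1}V^{\pi_1,\pi_2^*}(\bar{s})\geq\min_{\pi_2}\max_{\pi_1}V^{\pi_1,\pi_2}(\bar{s})$, so
$$V^{\pi_1^*,\pi_2^*}(\bar{s})\geq\min_{\pi_2}\max_{\pi_1}V^{\pi_1,\pi_2}(\bar{s}).$$

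To close the loop, I will apply the standard weak duality (minimax) inequality, which holds over any two sets of policies: $\max_{\pi_1}\min_{\pi_2}V^{\pi_1,\pi_2}(\bar{s})\leq\min_{\pi_2}\max_{\pi_1}V^{\pi_1,\pi_2}(\bar{s})$. Chaining the three inequalities yields
$$V^{\pi_1^*,\pi_2^*}(\bar{s})\leq\max_{\pi_1}\min_{\pi_2}V^{\pi_1,\pi_2}(\bar{s})\leq\min_{\pi_2}\max_{\pi_1}V^{\pi_1,\pi_2}(\bar{s})\leq V^{\pi_1^*,\pi_2^*}(\bar{s}),$$
forcing all four quantities (including $V^*(\bar{s})$) to be equal, which is exactly the desired conclusion.

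This argument is essentially mechanical once Lemmas~\ref{lem:pi2} and~\ref{lem:pi1} are in hand; the only step that requires a sentence of justification is the weak duality inequality, but that is a generic property of any saddle-point problem and requires no structure from $\G$. Hence I do not anticipate a genuine obstacle; the main care needed is just to invoke the correct form of each earlier lemma (note that both lemmas hold for \emph{any} $\pi_1^*,\pi_2^*$ satisfying the greedy definitions, so the statement automatically applies to all such pairs, as required).
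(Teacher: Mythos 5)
Your proposal is correct and follows essentially the same route as the paper: both instantiate the outer $\max$ at $\pi_1^*$ and the outer $\min$ at $\pi_2^*$, use Lemmas~\ref{lem:pi2} and~\ref{lem:pi1} to identify the resulting quantities with $V^{\pi_1^*,\pi_2^*}(\bar{s})$, and close the cycle with the weak-duality inequality $\max_{\pi_1}\min_{\pi_2}\leq\min_{\pi_2}\max_{\pi_1}$. The only difference is presentational --- the paper writes it as one descending chain starting from $V^*(\bar{s})$, while you state the three inequalities separately before chaining them.
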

\begin{proof}

We have, for any $\bar{s}\in\bar{S}$,
\begin{align*}
    V^*(\bar{s})&=\max_{\pi_1}\min_{\pi_2}V^{\pi_1,\pi_2}(\bar{s})\\
    &\geq \min_{\pi_2}V^{\pi_1^*,\pi_2}(\bar{s})\\
    &\stackrel{(1)}{=} V^{\pi_1^*,\pi_2^*}(\bar{s})\\
    &\stackrel{(2)}{=} \max_{\pi_1}V^{\pi_1, \pi_2^*}(\bar{s})\\
    &\geq \min_{\pi_2}\max_{\pi_1}V^{\pi_1,\pi_2}(\bar{s})\\
    &\geq \max_{\pi_1}\min_{\pi_2}V^{\pi_1,\pi_2}(\bar{s})\\
    &= V^*(\bar{s})
\end{align*}
where the (1) followed from Lemma~\ref{lem:pi2} and (2) followed from Lemma~\ref{lem:pi1}.
\end{proof}

\subsection{Proof of Theorem~\ref{thm:reduction}}
Let $\Pi(\pi_1) = \{\pi_{\sigma}\mid \sigma\in\Sigma\}$ be the set of subtask policies defined by $\pi_1$. Let $\tau=\sigma_0\sigma_1\ldots$ be a task. Then we define a history-dependent policy $\pi_2^\tau$ in $\G(\pi_1)$ which maintains an index $i$ denoting the current subtask and picks $\sigma_{i+1}$ upon reaching any state in $S_2$ while simultaneously updating the index to $i+1$. Then we have
\begin{align*}
    J(\Pi(\pi_1)) &= \inf_{\tau\in\T}\E_{\rho\sim\D_{\tau}^{\Pi}}\Big[\sum_{t=0}^\infty\gamma^tR_{\tau[i_t]}(s_t, \pi_{\tau[i_t]}(s_t))\Big]\\
    &\stackrel{(1)}{=} \inf_{\tau\in\T}\E_{\bar{s}\sim\bar{\eta}}\Big[\E_{\rho\sim\D_{\bar{s}}^{\G(\pi_1)}(\pi_2^{\tau})}\big[-\sum_{t=0}^{\infty}\gamma^t R_{\pi_1}(\bar{s}_t, a_t)\big]\Big]\\
    &= \inf_{\tau\in\T}\E_{\bar{s}\sim\bar{\eta}}[-V_{\G(\pi_1)}^{\pi_2^\tau}(\bar{s})]\\
    &\geq\E_{\bar{s}\sim\bar{\eta}}[-\sup_{\tau\in\T}V_{\G(\pi_1)}^{\pi_2^\tau}(\bar{s})]\\
    &\stackrel{(2)}{\geq}\E_{\bar{s}\sim\bar{\eta}}[-\max_{\pi_2}V_{\G(\pi_1)}^{\pi_2}(\bar{s})]\\
    &\stackrel{(3)}{=}\E_{\bar{s}\sim\bar{\eta}}[\min_{\pi_2}V^{\pi_1,\pi_2}(\bar{s})]\\
    &= J_\G(\pi_1)
\end{align*}
where (1) followed from the definitions of $\pi_2^\tau$ and $\G(\pi_1)$, (2) followed from the fact that there is an optimal stationary policy maximizing $V_{\G(\pi_1)}^{\pi_2}(\bar{s})$ and (3) followed from Lemma~\ref{lem:pi2}.

\subsection{Proof of Theorem~\ref{thm:opt_policy}}
Since $V^*=\ext{V_{\lim}}$, for all $(s,\sigma)\in\bar{S}$ we have $\pi_1^*(s,\sigma) \in \operatorname{\arg\max}_{a\in A}\F_a(V_{\lim})(s,\sigma)$. Now for any $\pi_2^*$ satisfying Equation~\ref{eq:pi2_def}, we can conclude from Lemma~\ref{lem:best_policy} that, for any $\bar{s}\in\bar{S}$,
\begin{align*}
    J_{\G}(\pi_1^*)&=\E_{\bar{s}\sim\bar{\eta}}[\min_{\pi_2}V^{\pi_1^*,\pi_2}(\bar{s})]\\
   &= \E_{\bar{s}\sim\bar{\eta}}[V^{\pi_1^*,\pi_2^*}(\bar{s})]\\
   &= \E_{\bar{s}\sim\bar{\eta}}[\max_{\pi_1}\min_{\pi_2}V^{\pi_1,\pi_2}(\bar{s})]\\
   &\geq \max_{\pi_1}\E_{\bar{s}\sim\bar{\eta}}[\min_{\pi_2}V^{\pi_1,\pi_2}(\bar{s})]\\
   &= \max_{\pi_1}J_{\G}(\pi_1)
\end{align*}
which shows that $\pi^*_1$ maximizes $J_{\G}(\pi_1)$.\hfill\qed

\subsection{Proof of Theorem~\ref{thm:fp}}
From Lemma~\ref{lem:contr}, we can conclude that $\F$ is a contraction over $\V$ w.r.t. the $\ell_{\infty}$-norm. Lemmas~\ref{lem:best_policy} and~\ref{lem:pi2} gives us that $V^*\downarrow_{S_1} = V_{\lim}$. Now the definition of $V_{\lim}$ implies that $\lim_{n\to\infty}\F^n(V) = V^*\downarrow_{S_1}$ for all $V\in\V$.\hfill\qed

\subsection{Proof of Theorems~\ref{thm:async} and~\ref{thm:async_partial}}
This proof is similar to the proof of convergence of asynchronous value iteration for MDPs presented in the book by~\citet{BertsekasTsitsiklis96}. It is easy to see that, for any $V\in\V$ and $\sigma\in\Sigma$, the operators $\ext{\cdot}$, $\F$, $\F_{\async}$, and $\F_{\sigma, V}$ are monotonic. Recall that, for any $V\in\V$ and $\sigma\in\Sigma$, we defined the corresponding ${V}_\sigma\in\V_{\sigma}$ using ${V}_{\sigma}(s) = \ext{V}(s,\sigma)$ if $s\in S$ and ${V}_{\sigma}(\bot) = 0$. Also, we have $\F(V)(s,\sigma) = \F_{\sigma, V}(V_{\sigma})(s) = \F_1(V)(s,\sigma)$ for all $(s,\sigma)\in S_1$.

Now let $V\in\V$ be a value function such that $\F(V) \leq V$. Then we have $\F_{\sigma,V}(V_\sigma)\leq V_{\sigma}$ for all $\sigma\in\Sigma$. Therefore, using monotonicity of $\F_{\sigma,V}$, we get that $\F_{\sigma,V}^m(V_{\sigma})\leq \F_{\sigma,V}^{m-1}(V_{\sigma})\leq V_{\sigma}$ for all $m>0$ which implies $\F_m(V)\leq\F_{m-1}(V)\leq V$. Hence, for any $(s,\sigma)\in S_1$,
\begin{align*}
    \F_{\async}(V)(s,\sigma) &= \W_{\sigma}(V)(s)
    =\lim_{m\to\infty}\F_{\sigma,V}^m(V_\sigma)(s)\\
    &\leq \F_{\sigma,V}(V_{\sigma})(s) = \F(V)(s,\sigma).
\end{align*}
Furthermore, letting $V^m = \F_m(V)$ we get that $\ext{V^m}\leq \ext{V}$ and hence $V^m_{\sigma}\leq \F_{\sigma,V}^{m}(V_{\sigma})$ for all $\sigma\in\Sigma$. Also, for $(s,\sigma)\in S_1$, $\F(V^m)(s,\sigma)=\F_{\sigma, V^m}(V^m_{\sigma})(s)\leq \F_{\sigma,V}(V^m_{\sigma})(s)\leq\F_{\sigma,V}^{m+1}(V_{\sigma})(s) = V^{m+1}(s,\sigma)$. Therefore, using continuity of $\F$, we have $\F(\F_{\async}(V)) = \F(\lim_{m\to\infty}V^m) = \lim_{m\to\infty}\F(V^m)\leq \lim_{m\to\infty}V^{m+1} = \F_{\async}(V)$. Now we can show by induction on $n$ that, for any $V\in\V$ with $\F(V)\leq V$ and $n\geq 1$, we have $\F(\F_{\async}^n(V))\leq \F_{\async}^n(V)$ and
$$V_{\lim}\leq \F_{\async}^n(V)\leq \F^n(V).$$
Taking the limit as $n\to\infty$ gives us that $\lim_{n\to\infty}\F_{\async}^n(V) = V_{\lim}$ if $\F(V)\leq V$. Using a symmetric argument, we get that $\lim_{n\to\infty}\F_{\async}^n(V) = V_{\lim}$ if $\F(V)\geq V$.

Let $I\in\V$ be defined by $I(s,\sigma) = 1$ for all $(s,\sigma)\in S_1$. For a general $V\in\V$, we can find a $\delta>0$ such that we have $V^-=V_{\lim}-\delta I\leq V\leq V_{\lim}+\delta I = V^+$ and $\F(V^-)\geq V^-$ and $\F(V^+)\leq V^+$. Therefore, using monotonicity of $\F_{\async}$ we get
$$\F_{\async}^n(V^-)\leq \F_{\async}^n(V)\leq\F_{\async}^n(V^+)$$
for all $n\geq 0$. Taking the limit as $n$ tends to $\infty$ gives us the required result. Theorem~\ref{thm:async_partial} follows from a similar argument.\hfill\qed

\subsection{Proof of Theorem~\ref{thm:q_learning}}
Given a function $Q:S_1\times A\to \R$ we define a new function $\H(Q)$ using
$$\H(Q)(s,\sigma, a) = \bar{R}((s,\sigma), a) + \gamma\sum_{s'\in S}P(s'\mid s,a)\ext{V_Q}(s',\sigma)$$
for all $(s,\sigma)\in S_1$ and $a\in A$. Then, Robust Option $Q$-learning is of the form 
$$Q_{t+1}(s,\sigma,a) = (1-\alpha_t(s,\sigma,a))Q_t(s,\sigma,a) + \alpha_t(s,\sigma,a)\Big(H(Q_t)(s,\sigma,a) + w_t(s,\sigma,a)\Big)$$
where the noise factor is defined by
$$w_t(s,\sigma,a) = \gamma\ext{V_{Q_t}}(\tilde{s},\sigma) - \gamma\sum_{s'\in S}P(s'\mid s,a)\ext{V_Q}(s',\sigma)$$
with $\tilde{s}\sim P(\cdot\mid s,a)$ being the observed sample. Let $\X_t$ denote the measure space generated by the set of random vectors $\{Q_0,Q_1,\ldots,Q_t,w_0,\ldots,w_{t-1},\alpha_0,\ldots,\alpha_t\}$. Then, for all $(s,\sigma)\in S_1$, $a\in A$ and $t\geq 0$, we have
$$\E[w_t(s,\sigma,a)\mid \X_t]=0$$
and
$$\E[w^2_t(s,\sigma,a)\mid \X_t]\leq 4\gamma^2\max_{s'\in S}\Big\{\ext{V_{Q_t}}^2(s',\sigma)\Big\}\leq 4\gamma^2\max_{(s',\sigma')\in S_1, a'\in A}\Big\{Q_t^2(s',\sigma',a')\Big\}.$$
Furthermore, using Lemmas~\ref{lem:ext} and~\ref{lem:contr} and the definition of $V_Q$ we can conclude that $\H$ is a contraction w.r.t the $\ell_\infty$-norm and $Q^*$ is the unique fixed point of $\H$. Therefore, the random sequence of $Q$-functions $\{Q_t\}_{t\geq0}$ satisfies all assumptions in Proposition 4.4 of~\citet{BertsekasTsitsiklis96} implying that $Q_t\to Q^*$ as $t\to\infty$ with probability 1.\hfill\qed

\section{Experimental Details}
All experiments were run on a 48-core machine with 512GB of memory and 8 GPUs. In all approaches (ours and baselines) except for \textsc{Maddpg}, the policy consists of one fully-connected NN per subtask, each with two hidden layers. \textsc{Maddpg} consists of two policies, one for the agent and one for the adversary, each with two hidden layers. In the case of \textsc{Maddpg}, the subtask is encoded in the observation using a one-hot vector. All hyperparameters were computed by grid search over a small set of values.

\paragraph{Rooms environment.} The hidden dimension used is 64 for all approaches except \textsc{Maddpg} for which we use 128 dimensional hidden layers. For \textsc{Dagger}, \textsc{Naive} and \textsc{Arosac} we run \textsc{Sac} with Adam optimizer (learning rate of $\alpha=0.01$), entropy weight $\beta=0.05$, Polyac rate $0.005$ and batch size of 100. In each iteration of \textsc{Arosac} and \textsc{Dagger}, \textsc{Sac} is run for $N=10000$ steps. Similarly, \textsc{Rosac} is run with Adam optimizer (learning rates $\alpha_\psi=\alpha_\theta=0.01$), entropy weight $\beta=0.05$, Polyac rate $0.005$ and batch size of 300. The \textsc{Maddpg} baseline uses a learning rate of $0.0003$ and batch size of 256. \textsc{Paired} uses \textsc{Ppo} with a learning rate of $0.02$, batch size of $512$, minibatch size of $128$ and $4$ epochs for each policy update. The adversary is trained using \textsc{Reinforce} with a learning rate of $0.003$.

\paragraph{F1/10th environment.} The hidden dimension used is 128 for all approaches. For \textsc{Dagger}, \textsc{Naive} and \textsc{Arosac} we run \textsc{Sac} with Adam optimizer (learning rate of $\alpha=0.001$), entropy weight $\beta=0.03$, Polyac rate $0.005$ and batch size of 128. In each iteration of \textsc{Arosac} and \textsc{Dagger}, \textsc{Sac} is run for $N=10000$ steps. Similarly, \textsc{Rosac} is run with Adam optimizer (learning rates $\alpha_\psi=\alpha_\theta=0.001$), entropy weight $\beta=0.03$, Polyac rate $0.005$ and batch size of $5\times 128$. The \textsc{Maddpg} baseline uses a learning rate of $0.0003$ and batch size of 256. \textsc{Paired} uses \textsc{Ppo} with a learning rate of $0.001$, batch size of $512$, minibatch size of $128$ and $4$ epochs for each policy update. The adversary is trained using \textsc{Reinforce} with a learning rate of $0.003$.

\end{document}